\title{Beyond Average Return in Markov Decision Processes}
\author{%
  Alexandre~Marthe \\
    UMPA\\
    ENS de Lyon\\
    Lyon, France\\
  \texttt{alexandre.marthe@ens-lyon.fr} \\
   \And
   Aurelien Garivier \\
   UMPA UMR 5669  and  LIP UMR 5668 \\
   Univ. Lyon, ENS de Lyon \\
   46 allée d'Italie F-69364 Lyon cedex 07, France  \\
   \texttt{aurelien.garivier@ens-lyon.fr} \\
   \AND
   Claire Vernade \\
   University of Tuebingen \\
   Tuebingen, Germany \\
   \texttt{claire.vernade@uni-tuebingen.de} \\
}
\begin{document}

\maketitle

\begin{abstract}

What are the functionals of the reward that can be computed and optimized exactly in Markov Decision Processes?
In the finite-horizon, undiscounted setting, Dynamic Programming (DP) can only handle these operations efficiently for certain classes of statistics. We summarize the characterization of these classes for policy evaluation, and give a new answer for the planning problem. Interestingly, we prove that only generalized means can be optimized exactly, even in the more general framework of Distributional Reinforcement Learning (DistRL).
DistRL permits, however, to evaluate other functionals approximately. We provide error bounds on the resulting estimators, and discuss the potential of this approach as well as its limitations.
These results contribute to advancing the theory of Markov Decision Processes by examining overall characteristics of the return, and particularly risk-conscious strategies. 

\end{abstract}

\section{Introduction}

Reinforcement Learning (RL) has emerged as a flourishing field of study, delivering significant practical applications ranging from robot control and game solving to drug discovery or hardware design \citep{lazic2018data,popova2018deep,volk2023alphaflow,mirhoseini2020chip}. The cornerstone of RL is the "return" value, a sum of successive rewards. Conventionally, the focus is on computing and optimizing its expected value on Markov Decision Process (MDP). The remarkable efficiency of MDPs comes from their ability to be solved through dynamic programming with the Bellman equations \citep{sutton2018reinforcement, szepesvari_algorithms_2010}.
RL theory has seen considerable expansion, with a renewed interest for the consideration of more rich descriptions of a policy's behavior than the sole average return. At the other end of the spectrum, the so-called \emph{Distributional Reinforcement Learning} (DistRL) approach aims at studying and optimizing the entire return  distribution, leading to impressive practical results \citep{bellemare_distributional_2017,hessel_rainbow_2018,wurman_outracing_2022,fawzi_discovering_2022}. 
Between the expectation and the entire distribution, the efficient handling of other statistical functionals of the reward appears also particularly relevant for risk-sensitive contexts \citep{8813791,mowbray2022distributional}.

Despite recent progress, the full understanding of the abilities and limitations of DistRL to compute other functionals remains incomplete, with the underlying theory yet to be fully understood. Historically, the theory of RL has been established for discounted MDPs, see e.g. \citep{sutton2018reinforcement,watkins1992q,szepesvari_algorithms_2010,bdr2023} for modern reference textbooks. Recently more attention was drawn to the undiscounted, finite-horizon setting \citep{auer2002using,osband2013more,jin2018q,glp2020aaaitutorial}, for which fundamental questions remain open.
In this paper, we explore policy evaluation, planning and exact learning algorithms for undiscounted MDPs for the optimization problem of general functionals of the reward. We explicitly delimit the possibilities offered by dynamic programming as well as DistRL. 

Our paper specifically addresses two questions:
\begin{itemize}
    \item[(i)] How accurately can we evaluate statistical functionals by using DistRL?
    \item[(ii)] Which functionals can be exactly optimized through dynamic programming?
\end{itemize}

We first recall the fundamental results in dynamic programming and Distributional RL. Addressing question (i), we refer to \citet{rowland_statistics_2019}'s  results on Bellman closedness and provide their adaptation to undiscounted MDPs. We then prove upper bounds on the approximation error of Policy Evaluation with DistRL and corroborate these bounds with practical experiments.
For question (ii), we draw a connection between Bellman closedness and planning. We then utilize the DistRL framework to identify two key properties held by \emph{optimizable} functionals. 

Our main contribution is a characterization of the families of utilities that verify these two properties (Theorem~\ref{th:affine_exp_utilities}). This result gives a comprehensive answer to question (ii) and closes an important open issue in the theory of MDP. 
It shows in particular that DistRL does not extend the class of functionals for which planing is possible beyond what is already allowed by classical dynamic programming.

\section{Background}

\label{sec:background}

We introduce the classical RL framework in finite-horizon tabular Markov Decisions Processes (MDPs).
We write $\PPPP(\RR)$ the space of probability distributions on $\RR$. A finite-horizon tabular MDP is a tuple $\mathcal{M} = (\SSS, \AAA, p, R, H) $, where $\SSS$ is a finite state space, $\AAA$ is a finite action space, $H$ is the horizon, 
for each $h\in [H]$, $p_h(x, a, \cdot)$ is a transition probability law and $R_h(x,a)$ is a reward random variable with distribution $\varrho_h$. The parameters $(p_h)$ and $(R_h)$ define the \emph{model} of dynamics. A deterministic policy on $\mathcal{M}$ is a sequence $\pi = (\pi_1,\dots,\pi_{H})$ of functions  $\pi_h:\SSS \to \AAA$.

Reinforcement Learning traditionally focuses on learning policies optimizing the expected return. For a given policy $\pi$, the $Q$-function maps a state-action pair to its expected return under $\pi$:

\begin{equation}
    \label{eq:expected-q-function}
    \qval^\pi_h(x,a) = \EEE{\varrho_h}{R_h(x,a)} + \sum_{s'}p_h(x,a,x')\qval^\pi_{h+1}(x',\pi_{h+1}(x')),\qquad \qval^\pi_{H+1}(x,a) = 0\;.
\end{equation}

When the model is known, the $Q$-function of a policy $\pi$ can be computed by doing a backward recursion, also called dynamic programming. This is referred to as \emph{Policy Evaluation}. Similarly, an optimal policy can be found by solving the optimal Bellman equation:

\begin{equation}\label{eq:expected-q-function-opt}
\qval^*_h(x,a) = \EEE{\varrho_h}{R_h(x,a)} + \sum_{x'}p_h(x,a,x')\max_{a'}\qval^*_{h+1}(x',a'),\qquad \qval^*_{H+1}(x,a) = 0 \; .\end{equation}

Solving this equation when the model is known is also called \emph{Planning}. When it is unknown, \emph{reinforcement learning} aims at finding the optimal policy from sample runs of the MDP.
But evaluating and optimizing the \emph{expectation} of the return in the definition of the Q-function above is just one choice of statistical functional. We now introduce Distributional RL and then discuss other statistical functionals that generalize the expected setting discussed so far.

\subsection{Distributional RL}
\label{sec:back_distrib}

Distributional RL (DistRL) refers to the approach that tracks not just a statistic of the return for each state but its \emph{entire distribution}. We introduce here the most important basic concepts and refer the reader to the recent comprehensive survey by \citet{bdr2023} for more details. 
The main idea is to use the full distributions to estimate and optimize various metrics over the returns ranging from the mere expectation \citep{bellemare_distributional_2017} to more complex metrics \citep{rowland_statistics_2019, dabney_implicit_2018, liang_bridging_2022}. 

At state-action $(x,a)$, let $\ret_h^\pi(x,a)$ denote the future sum of rewards when following policy $\pi$ and starting at step $h$, also called \emph{return}. It verifies the simple recursive formula $\ret^\pi_h(x,a) = R_h(x,a) + \ret^\pi_{h+1}(X',\pi_{h+1}(X'))$ where $X' \sim p_{h+1}(x, a, \cdot)$. 
Its distribution is $\qdist = (\dist{h}{\pi}{(x,a)})_{(x,a,h)\in \SSS \times \AAA \times [H]}$ and is often referred to as the \emph{Q-value distribution}. One can easily derive the recursive law of the return as a convolution:
for any two measures $\nu_1,\nu_2 \in \PPPP(\RR)$, we denote their convolution by  $\nu_1\ast \nu_2 (t)= \int_\RR \nu_1(\tau) \nu_2(t-\tau)d\tau$.
For any two independent random variables $X$ and $Y$, the distribution of the sum $Z=X+Y$ is the convolution of their distributions: $\nu_Z = \nu_X \ast \nu_Y$. Thus, the law of $\ret^\pi_h(x,a)$ is 
\begin{equation}
    \label{eq:bell_dist_upd}
    \forall x,a,h,\quad \dist{h}{\pi}{(x,a)} = \varrho_h(x,a) \ast \sum_{x'} p_{h}(x,a,x') \dist{h}{\pi}{(x',\pi_{h+1}(x'))}\;.
\end{equation}
This equation is a distributional equivalent to Eq.~\eqref{eq:expected-q-function} and thus defines a \emph{distributional Bellman operator} $\dist{h}{\pi}{} = \TT^\pi_h\dist{h+1}{\pi}{}$. 

Obviously, from a practical point of view, distributions form a non-parametric family that is not computationally tractable. It is necessary to choose a parametric (thus incomplete) family to represent them. Even the restriction to discrete reward distributions is not tractable, since the number of atoms in the distributions may grow exponentially with the number of steps\footnote{The support of the return (sum of the rewards) is incremented at each step by a number of atoms that depend on the current support. } \citep{achab_robustness_2021}: approximations are unavoidable. The most natural solution is to use projections of the obtained distribution on the parametric family, at each step of the Bellman operator. This process is called \emph{parameterization}.
The practical equivalent to Eq.~\eqref{eq:expected-q-function} in DistRL hence writes
\begin{equation}
    \forall x,a,h,\quad \hdist{h}{\pi}{(x,a)} = \Pi\left( \varrho_h(x,a) \ast \sum_{x'} p_{h}(x,a,x') \hdist{h+1}{\pi}{(x',\pi_{h+1}(x'))} \right)\;,\label{eq:bell_proj_upd}
\end{equation}
where $\Pi$ is the projector operator on the parametric family.
The full policy evaluation algorithm in DistRL is summarized in Alg.\ref{alg:eval_rec}.

\begin{algorithm}
\caption{Policy Evaluation (Dynamic Programming) for Distributional RL\label{alg:eval_rec}}
\begin{algorithmic}
\State \textbf{Input:} model $p$, reward distributions $\varrho_h$, policy $\pi$ to evaluated, $\Pi$ projection.
\State \textbf{Data:} $\qdist \in \RR^{H |\SSS| |\AAA| N}$
\State $\forall x,a\in \SSS \times \AAA,\quad \dist{H}{}{(x,a)} = \delta_0$
\For{$h = H-1 \rightarrow 0$}
\State $\dist{h}{}{(x,a)} = \varrho_h(x,a) * \sum_{x'} p_h(x,a,x')\dist{h+1}{}{(x',\pi_{h+1}(x'))}\quad \forall x,a\in \SSS\times\AAA$ 
\State $\dist{h}{}{(x,a)} = \Pi\left(\dist{h}{}{(x,a)}\right) \quad \forall x,a\in \SSS\times\AAA$
\EndFor
\State \textbf{Output:} $\dist{h}{}{(x,a)} \forall x,a,h$
\end{algorithmic}
\end{algorithm}

\paragraph*{Distribution Parametrization}
The most commonly used parametrization is the so-called \emph{quantile projection}. It put Diracs (atoms) with fixed weights at locations that correspond to the quantiles of the source distribution. One main benefit is that it does not require a previous knowledge of the support of the distribution, and allows for unbounded distributions. 

The quantile projection is defined as
\begin{align}
\Pi_{Q}\mu = \frac{1}{N} \sum_{i = 0}^{N-1} \delta_{z_i}, \quad \text{with } (z_i)_i \text{ chosen such as } F_\mu(z_i) = \frac{2i + 1}{2N} \,,    
\end{align}
which corresponds to a minimal $\wam_1$ distance: $\Pi_Q \mu \in \argmin_{\hat \mu=\sum_i \delta_i/N} \wam_1(\mu, \hat\mu)$, where $\wam_1(.,.)$ is the Wasserstein distance defined for any distributions $\nu_1,\nu_2$ as $\wam_1(\mes_1,\mes_2) = \int_0^1 \left|F_{\mes_1}^{-1}(u) - F_{\mes_2}^{-1}(u)\right|\ \dd u$.
Note that this parametrization might admit several solutions and thus the projection may not be unique.
For simplicity, we overload the notation to  $\Pi_Q \qdist = (\Pi_Q \dist{}{}{(x,a)})_{(x,a) \in \SSS \times \AAA}$

For a Q-value distribution $\qdist$ with support of length $\Delta_\qdist$, and parametrization of resolution $N$, \citet{rowland_statistics_2019} prove that the projection error is bounded by
\begin{equation}
    \sup_{(x,a) \in \SSS \times \AAA} \, \wam_1 (\Pi_Q\dist{}{}{(x,a)}, \dist{}{}{(x,a)}) \leq \frac{\Delta_\qdist}{2N} \;.\label{eq:proj_approx}
\end{equation}

In Section~\ref{sec:eval}, we extend this result to the full iterative Policy Evaluation process and bound the error on the returned statistical functional in the finite-horizon setting.
Note that other studied parametrizations exist but are less practical. For completeness, we discuss the Categorical Projection \citep{bellemare_distributional_2017}\citep{rowland_analysis_2018}\citep{bdr2023} in Appendix~\ref{ap:projection}.

\subsection{Beyond expected returns}

The expected value is an important functional of a probability distribution, but it is not the only one of interest in decision theory -- especially when a control of the risk is important. We discuss two concepts that have received considerable attention: \emph{utilities}, defined as expected values of functions of the return, and \emph{distorted means} which place emphasis on certain quantiles.
 
\textbf{Expected Utilities} are of the form $\mathbb{E}[f(\ret)]$, or $\int f\ \dd\mes$, where $\ret$ is the return of distribution $\mes$ and $f$ is an increasing function. For instance, when $f$ is a power function, we obtain the different moments of the return. The case of exponential functions plays a particularly important role: the resulting utility is referred to as \emph{exponential utility}, \emph{exponential risk measure}, or \emph{generalized mean} according to the context: 
\begin{equation}
	\label{eq:exp_utility}
	U_{\exp} (\nu) = \frac{1}{\lambda}\log\mathbb{E}\big[\exp(\lambda X)\big] \quad X\sim \nu \; \text{and } \lambda \in \RR\;. 
\end{equation}
This family of utilities has a variety of applications in finance, economics, and decision making under uncertainty \citep{FollmerSchied+2016}. They can be considered as a risk-aware generalization of the expectation, with benefits such as accommodating a wide range of behaviors \citep{6855488} from risk-seeking when $\lambda > 0$, to risk-averse when $\lambda < 0$  (the limit $\lambda\to 0$ is exactly the expectation). To fix ideas, $U_{\exp} \big(\mathcal{N}(\mu, \sigma^2)\big) = \mu + \lambda \sigma^2$: each $\lambda$ captures a certain quantile of the Gaussian distribution. 

\textbf{Distorted means}, on the other hand, involve taking the mean of a random variable, but with a different weighting scheme \citep{dabney_implicit_2018}. The goal is to place more emphasis on certain quantiles, which can be achieved by considering the quantile function $F^{-1}$ of the random variable and a continuous increasing function $\beta:[0,1]\rightarrow [0,1]$. By applying $\beta$ to a uniform variable $\tau$ on $[0,1]$ and evaluating $F^{-1}$ at the resulting value $\beta(\tau)$, we obtain a new random variable that takes the same values as the original variable, but with different probabilities. The distorted mean is then calculated as the mean of this new random variable, given by the formula $\int \beta'(\tau)F^{-1}(\tau)d\tau$. If $\beta$ is the identity function, the result is the classical mean. When $\beta$ is $\tau \mapsto \min(\tau/\alpha, 1)$, we get the $\alpha$-Conditional Value at Risk (CVaR$(\alpha)$) of the return, a risk measure widely used in risk evaluation \citep{rockafellar2000optimization}.

\section{Policy Evaluation}

\label{sec:eval}

The theory of MDPs is particularly developed for estimating and optimizing the mean of the return of a policy. 
But other values associated to the return can be computed the same way, by dynamic programming. This includes for instance the variance of the return, or more generally, any moment of order $p\geq 2$, as was already noticed in the 1980's \citep{sobel_variance_1982}. Recently, \citet{rowland_statistics_2019} showed that for utilities in discounted MDPs, this is essentially all that can be done. More precisely, they introduce the notion of \emph{Bellman closedness} (recalled below for completeness) that characterizes a finite set of statistics that can efficiently be computed by dynamic programming. 
\begin{definition}[Bellman closedness \citep{rowland_statistics_2019}]
    A set of statistical functionals $\{ \sfunc_1, \dots \sfunc_K \}$ is said to be \emph{Bellman closed} if for each $(x,a) \in \SSS \times \AAA$, the statistics $\sfunc_{1:K}(\dist{h}{\pi}{(x,a)})$ can be expressed in closed form in terms of the random variables $\rew_h(x,a)$ and $(\sfunc_{1:K}(\dist{h+1}{\pi}{(X',A')}),\ A'\sim \pi(x),\ X'\sim p_h(x, A', \cdot)$,
    independently of the MDP.
\end{definition}

Importantly, in the undiscounted setting, \citet{rowland_statistics_2019}(Appendix B, Theorem 4.3) show that the only families of utilities that are Bellman closed are of the form $\{ x \mapsto x^\ell \exp(\lambda x) | 0\leq \ell \leq L \} $ for some $L < \infty$. Thus, all utilities and statistics of the form of (or linear combinations of) moments and exponential utilities can easily be computed by classic linear dynamic programming and do not require distributional RL (see Appendix~\ref{ap:dp_moments}).

Some important metrics such as the CVaR or the quantiles are not known to belong to any Bellman-closed set and hence cannot be easily computed.
For this kind of function of the return, the knowledge of the transitions and the values in following steps is insufficient to compute the value on a specific step. In general, it requires the knowledge of the whole distribution of each reward in each state. Hence, techniques developed in distributional RL come in handy: for a choice of parametrization, one can use the projected dynamic programming step Eq.~\eqref{eq:bell_proj_upd} to propagate a finite set of values along the MDP and approximate the distribution of the return. In the episodic setting, following the line of \citet{rowland_statistics_2019} (see Eq.\eqref{eq:proj_approx}), we prove that the Wasserstein distance error between the exact and approximate distribution of the Q-values of a policy is bounded.

\begin{proposition}
\label{prop:error_dist}
    Let $\pi$ be a policy and $\dist{}{\pi}{}$ the associated Q-value distributions. Assume the return is bounded on a interval of length $\Delta_\qdist \leq H\Delta_R$, where $\Delta_R$ is the support size of the reward distribution.   Let $\hdist{}{\pi}{}$ be the Q-value distributions obtained by dynamic programming (Algorithm~\ref{alg:eval_rec}) using the quantile projection $\Pi_Q$ 
    with resolution $N$. Then,
    $$ \sup_{(x,a,h) \in (\SSS,\AAA,[H])}\wam_1(\hdist{h}{\pi}{(x,a)},\dist{h}{\pi}{(x,a)}) \leq H\frac{\Delta_\qdist}{2N} \leq H^2\frac{\Delta_R}{2N}\;.$$
\end{proposition}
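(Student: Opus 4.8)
The plan is to prove the bound by induction on the backward step $h$, going from $h=H$ down to $h=0$, showing that the Wasserstein error accumulated after $H-h$ projection steps is at most $(H-h)\frac{\Delta_\qdist}{2N}$. The two ingredients are (a) the single-step projection bound from Eq.~\eqref{eq:proj_approx}, namely $\wam_1(\Pi_Q\nu,\nu)\le \Delta_\qdist/(2N)$ for any distribution $\nu$ supported on an interval of length $\le \Delta_\qdist$, and (b) the non-expansiveness of the exact distributional Bellman operator $\TT^\pi_h$ in the (supremum-over-states) Wasserstein-$1$ metric. Concretely, for two families of distributions $\mu,\mu'$ one has $\wam_1(\TT^\pi_h\mu(x,a),\TT^\pi_h\mu'(x,a)) \le \sup_{x',a'}\wam_1(\mu(x',a'),\mu'(x',a'))$; this follows because convolution with the fixed reward law $\varrho_h(x,a)$ preserves $\wam_1$, and mixing over the transition kernel $p_h(x,a,\cdot)$ can only contract (convexity of $\wam_1$ in each argument).

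The induction itself runs as follows. At $h=H$ both $\dist{H}{\pi}{(x,a)}$ and $\hdist{H}{\pi}{(x,a)}$ equal $\delta_0$, so the error is $0$. For the inductive step, write $e_h := \sup_{x,a}\wam_1(\hdist{h}{\pi}{(x,a)},\dist{h}{\pi}{(x,a)})$. From Eq.~\eqref{eq:bell_proj_upd} the approximate update is $\hdist{h}{\pi}{} = \Pi_Q\,\TT^\pi_h\,\hdist{h+1}{\pi}{}$, while the exact one is $\dist{h}{\pi}{} = \TT^\pi_h\,\dist{h+1}{\pi}{}$. Inserting the intermediate term $\TT^\pi_h\,\hdist{h+1}{\pi}{}$ and using the triangle inequality for $\wam_1$ gives, for each $(x,a)$,
\[
\wam_1\big(\hdist{h}{\pi}{(x,a)},\dist{h}{\pi}{(x,a)}\big)
\le \wam_1\big(\Pi_Q\TT^\pi_h\hdist{h+1}{\pi}{}(x,a),\TT^\pi_h\hdist{h+1}{\pi}{}(x,a)\big)
+ \wam_1\big(\TT^\pi_h\hdist{h+1}{\pi}{}(x,a),\TT^\pi_h\dist{h+1}{\pi}{}(x,a)\big).
\]
The first term is bounded by $\Delta_\qdist/(2N)$ via (a) — here we need that the argument $\TT^\pi_h\hdist{h+1}{\pi}{}(x,a)$ is supported in an interval of length $\le\Delta_\qdist$, which holds since the true return is bounded in such an interval and the projection/convolution steps keep the iterates within (a subset of) that range. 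The second term is bounded by $e_{h+1}$ via (b). Taking the supremum over $(x,a)$ yields $e_h \le e_{h+1} + \Delta_\qdist/(2N)$, and unrolling from $e_H=0$ down to $h$ gives $e_h \le (H-h)\frac{\Delta_\qdist}{2N} \le H\frac{\Delta_\qdist}{2N}$. The final inequality $\Delta_\qdist \le H\Delta_R$ is exactly the hypothesis, giving the claimed $H^2\Delta_R/(2N)$.

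The main obstacle is the bookkeeping around the support lengths: one must verify carefully that every intermediate distribution fed into $\Pi_Q$ is supported on an interval of length at most $\Delta_\qdist$, so that the uniform bound $\Delta_\qdist/(2N)$ applies at every step rather than a step-dependent $\Delta$. Since the quantile projection only places atoms at quantiles of its argument, $\Pi_Q$ never enlarges the support, and convolution with $\varrho_h$ adds at most $\Delta_R$ to the support length at each step, so after $H-h$ steps the support has length at most $(H-h)\Delta_R \le H\Delta_R$, consistent with $\Delta_\qdist \le H\Delta_R$; one should state this monotonicity cleanly. The other point requiring a line of justification is the non-expansiveness claim (b) — in particular that $\wam_1$ is jointly convex so that averaging over $p_h(x,a,\cdot)$ does not increase the distance — which is standard but worth citing (e.g.\ from \citet{bdr2023}).
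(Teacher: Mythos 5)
Your proposal is correct and follows essentially the same route as the paper's proof: the same triangle-inequality decomposition through the intermediate term $\TT^\pi_h\hdist{h+1}{\pi}{}$, the same two ingredients (the single-step quantile projection bound and the non-expansiveness of the distributional Bellman operator in $\overline{\wam}_1$), and the same backward unrolling from the zero error at $h=H$. Your extra care about verifying that every intermediate distribution fed into $\Pi_Q$ stays supported on an interval of length at most $\Delta_\qdist$ is a welcome addition that the paper leaves implicit.
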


This result shows that the loss of information due to the parametrization may only grow quadraticly with the horizon. The proof consists of summing the projection bound in \eqref{eq:proj_approx} at each projection step, and using the non-expansion property of the Bellman operator \citep{bellemare_distributional_2017}. The details can be found in Appendix~\ref{ap:proof_policy_eval}

The key question is then to understand how such error translates into our estimation problem when we apply the function of interest to the approximate distribution.  
We provide a first bound on this error for the family of statistics that are either utilities or distorted means. 

First, we prove that the utility is Lipschitz on the set of return distributions.
\begin{lemma}
\label{lem:error_stat_func}
    Let $\sfunc$ be either an utility or a distorted mean and let $L$ be the Lipschitz coefficient of its characteristic function. Let $\mes_1, \mes_2$ be return distributions. Then:
    $$|\sfunc(\mes_1) - \sfunc(\mes_2)| \leq L\wam_1(\mes_1, \mes_2)\;.$$
\end{lemma}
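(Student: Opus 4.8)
The plan is to treat the two cases — utilities and distorted means — separately, since each has a clean integral representation in terms of the quantile function $F^{-1}$, and then exploit the fact that $\wam_1(\mes_1,\mes_2) = \int_0^1 |F_{\mes_1}^{-1}(u) - F_{\mes_2}^{-1}(u)|\,\dd u$ is precisely an $L^1$ distance between quantile functions. The key elementary tool in both cases is the change-of-variables identity $\int f\,\dd\mes = \int_0^1 f(F_\mes^{-1}(u))\,\dd u$, valid for any measurable $f$ and any distribution $\mes$, which lets me rewrite every functional of interest as an integral over the unit interval against the quantile function.

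For an \textbf{expected utility} $\sfunc(\mes) = \int f\,\dd\mes$ with $f$ $L$-Lipschitz (here $f$ is the ``characteristic function''), I would write
$$
|\sfunc(\mes_1) - \sfunc(\mes_2)|
= \left| \int_0^1 \big( f(F_{\mes_1}^{-1}(u)) - f(F_{\mes_2}^{-1}(u)) \big)\,\dd u \right|
\leq \int_0^1 \big| f(F_{\mes_1}^{-1}(u)) - f(F_{\mes_2}^{-1}(u)) \big|\,\dd u,
$$
then apply the Lipschitz bound $|f(s) - f(t)| \leq L|s-t|$ pointwise inside the integral to get $\leq L \int_0^1 |F_{\mes_1}^{-1}(u) - F_{\mes_2}^{-1}(u)|\,\dd u = L\,\wam_1(\mes_1,\mes_2)$. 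For a \textbf{distorted mean} $\sfunc(\mes) = \int_0^1 \beta'(\tau) F_\mes^{-1}(\tau)\,\dd\tau$, the argument is even more direct: the functional is already linear in $F_\mes^{-1}$, so
$$
|\sfunc(\mes_1) - \sfunc(\mes_2)|
= \left| \int_0^1 \beta'(\tau)\big(F_{\mes_1}^{-1}(\tau) - F_{\mes_2}^{-1}(\tau)\big)\,\dd\tau \right|
\leq \big\| \beta' \big\|_\infty \int_0^1 \big| F_{\mes_1}^{-1}(\tau) - F_{\mes_2}^{-1}(\tau) \big|\,\dd\tau,
$$
and $\|\beta'\|_\infty$ plays the role of the Lipschitz constant $L$ of $\beta$ (which is what ``characteristic function'' should mean in this case). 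This gives the bound with the same constant.

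The main obstacle is not any hard estimate but rather a matter of bookkeeping and hypotheses: I need to be careful about what ``characteristic function'' and its ``Lipschitz coefficient'' mean in each of the two cases (it is $f$ for utilities and $\beta$ for distorted means), and I should note the mild regularity assumptions that make the integral representations valid — $f$ Lipschitz already guarantees integrability against any $\mes$ with finite first moment, and for distorted means one needs $\beta$ absolutely continuous so that $\beta'$ exists a.e.\ and the fundamental theorem of calculus applies; since the return is bounded (as assumed in Proposition \ref{prop:error_dist}, the setting in which this lemma is used), all integrals are finite and no further care is needed. A secondary point worth a sentence is justifying the interchange that turns $|\int \cdots| $ into $\int |\cdots|$, which is just the triangle inequality for integrals, and the pointwise-a.e.\ application of the Lipschitz/boundedness bound under the integral sign, which is immediate. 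Thus the proof reduces to the quantile-function representation plus one application of Lipschitz continuity (resp.\ the sup bound on $\beta'$), and the constant in the statement is exactly the Lipschitz coefficient of the relevant characteristic function.
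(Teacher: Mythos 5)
Your proof is correct, and the distorted-mean case is essentially identical to the paper's (bound $|\beta'|$ by its sup norm inside the integral over $[0,1]$ and recognize the remaining integral as $\wam_1$). For the utility case, however, you take a genuinely different route: the paper invokes the Kantorovich--Rubinstein duality $\wam_1(\mes_1,\mes_2) = \sup_{\|g\|_L \leq 1}\left(\int g\,\dd\mes_1 - \int g\,\dd\mes_2\right)$ and reads off the bound, whereas you use the one-dimensional quantile representation $\int f\,\dd\mes = \int_0^1 f(F_\mes^{-1}(u))\,\dd u$ (i.e., the comonotone coupling $F_{\mes_1}^{-1}(U), F_{\mes_2}^{-1}(U)$ for $U$ uniform) and apply the Lipschitz bound pointwise. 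Your argument is in effect a direct, self-contained proof of the one inequality of the duality that is actually needed, valid because on $\RR$ the $\wam_1$ distance coincides with the $L^1$ distance between quantile functions; it avoids citing the duality theorem and unifies the two cases under a single integral representation. The paper's version is shorter given the citation but otherwise buys nothing extra --- both yield the same constant $L$. Your remarks on regularity (absolute continuity of $\beta$, integrability of $f$ against distributions with finite first moment) are appropriate and slightly more careful than the paper's treatment.
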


Both family of functionals are treated separately, but lays a similar bound. The utility bound is the direct application of the Kantorovitch-Rubenstein duality, while the distorted mean one is a direct majoration in the integral. Again, the details are provided in the Appendix.

This property allows us to prove a maximal upper bound on the estimation error for those two families. 
\begin{theorem}
\label{th:eval_stat_func_error}
    Let $\pi$ be a policy. Let $\dist{}{\pi}{}$ be the Q-value return distribution associated to $\pi$ with the return bounded on a interval of length $\Delta_\qdist \leq H\Delta_R$ where $\Delta_R$ is the support size of the reward distribution. Let $\hdist{}{\pi}{}$ be the approximated return distribution computed with Algorithm~\ref{alg:eval_rec}, for the projection $\Pi_Q$ with resolution $N$. Let $\sfunc$ be either an expected utility or a distorted mean, and $L$ the Lipschitz coefficient of its characteristic function. Then:
    $$\sup_{x,a,h}| \sfunc(\hdist{h}{\pi}{(x,a)}) - \sfunc(\hdist{h}{\pi}{(x,a)})| \leq LH\frac{\Delta_\qdist}{2N} \leq LH^2 \frac{\Delta_R}{2N}\;.$$
\end{theorem}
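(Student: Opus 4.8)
The statement is a direct composition of Proposition~\ref{prop:error_dist} and Lemma~\ref{lem:error_stat_func}, so the plan is simply to chain the two inequalities and take a supremum. Fix $(x,a,h)\in \SSS\times\AAA\times[H]$. Since $\sfunc$ is, by assumption, either an expected utility or a distorted mean, Lemma~\ref{lem:error_stat_func} applies to the pair of return distributions $\hdist{h}{\pi}{(x,a)}$ and $\dist{h}{\pi}{(x,a)}$ and gives
$$|\sfunc(\hdist{h}{\pi}{(x,a)}) - \sfunc(\dist{h}{\pi}{(x,a)})| \leq L\,\wam_1\big(\hdist{h}{\pi}{(x,a)},\dist{h}{\pi}{(x,a)}\big)\;,$$
where $L$ is the Lipschitz constant of the characteristic function of $\sfunc$ (the function $f$ for an expected utility $\nu\mapsto\int f\,\dd\nu$, for which Lemma~\ref{lem:error_stat_func} is exactly Kantorovich--Rubinstein duality; the distortion $\beta$ for a distorted mean, where it follows from a majorization inside the integral). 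Then I would substitute the uniform Wasserstein bound from Proposition~\ref{prop:error_dist}, namely $\wam_1(\hdist{h}{\pi}{(x,a)},\dist{h}{\pi}{(x,a)}) \leq H\Delta_\qdist/(2N) \leq H^2\Delta_R/(2N)$, and take the supremum over $(x,a,h)$ to obtain the announced chain of inequalities.

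\textbf{Points to verify.} A few routine checks are needed to make the composition legitimate. First, Lemma~\ref{lem:error_stat_func} is stated for return distributions, so one must observe that $\hdist{h}{\pi}{(x,a)}$ is still supported on an interval of length at most $\Delta_\qdist$: this holds because $\Pi_Q$ places its atoms at quantiles of the source measure, hence within its support, so the projected distributions never escape the interval on which the return is bounded, and the relevant Lipschitz constant is the one valid on that interval. Second, $\Pi_Q$ need not be unique; the bound is understood to hold for any measurable selection of the projection, which is harmless since both Proposition~\ref{prop:error_dist} and Lemma~\ref{lem:error_stat_func} are insensitive to which minimizer is chosen. Third, no further recursion over $h$ is required here: Proposition~\ref{prop:error_dist} already accounts for the accumulation of the $H$ per-step projection errors \eqref{eq:proj_approx} via the non-expansiveness of the distributional Bellman operator $\TT^\pi_h$.

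\textbf{Main obstacle.} There is no genuine difficulty: all the content sits in Proposition~\ref{prop:error_dist} and Lemma~\ref{lem:error_stat_func}, and the theorem is their two-line consequence. The only thing worth flagging is that the left-hand side as printed carries a hat on both arguments of $\sfunc$; the intended quantity — and the one the argument controls — is $|\sfunc(\hdist{h}{\pi}{(x,a)}) - \sfunc(\dist{h}{\pi}{(x,a)})|$, the gap between the functional evaluated on the approximate distribution produced by Algorithm~\ref{alg:eval_rec} and on the exact Q-value distribution.
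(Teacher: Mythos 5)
Your proposal is correct and matches the paper's (implicit) argument exactly: Theorem~\ref{th:eval_stat_func_error} is obtained by chaining Lemma~\ref{lem:error_stat_func} with Proposition~\ref{prop:error_dist} and taking the supremum, which is precisely how the paper derives it. You are also right that the left-hand side as printed carries a hat on both arguments (a typo); the intended quantity is $|\sfunc(\hdist{h}{\pi}{(x,a)}) - \sfunc(\dist{h}{\pi}{(x,a)})|$, which is what your argument controls.
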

Note that depending on the choice of utilities, the Lipschitz coefficient $L$ may also depend on $H$ and $\Delta R$. For instance, in a stationary MDP, the Lipschitz constant of the exponential utility depends exponentially on $\Delta_\qdist$. For the CVaR$(\alpha)$, however, $L$ is constant and only depends on $\alpha \in (0,1)$.

\paragraph{Experiment: empirical validation of the bounds on a simple MDP} 
\begin{figure}
    \centering
    \includegraphics[width=0.6\textwidth]{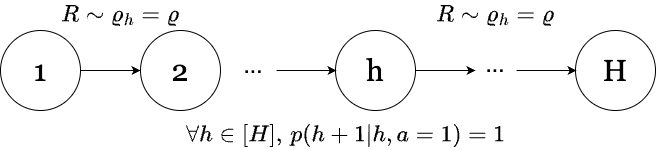}
    \caption{A Chain MDP of length $H$ with deterministic transition and identical reward distribution for each state.}
    \label{fig:setup}
\end{figure}

\begin{figure}
    \centering
    \includegraphics{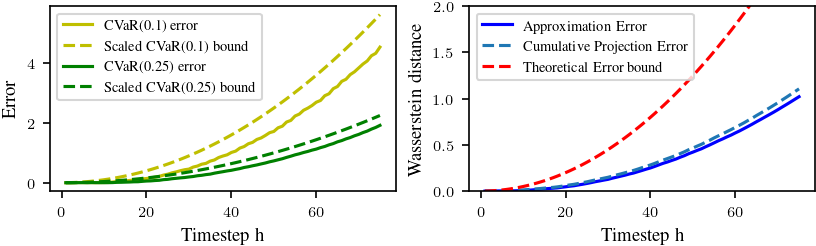}
    \caption{Left: Validation of Theorem~\ref{th:eval_stat_func_error} on CVaR$(\alpha)$ together with the scaled upper bound (see main text for discussion): the quadratic dependence in $H$ is verified. Right: Validation of Proposition~\ref{prop:error_dist}: The cumulative projection error (dashed blue) is the sum of the projection errors at every time step, and matches the true approximation error (solid blue). The theoretical upper bound (dashed red) differs only by a factor 2.}
    \label{fig:fig_eval}
\end{figure}

We consider a simple Chain MDP environment of length $H=70$ equal to the horizon (see Figure~\ref{fig:setup} (right)) 
\citep{rowland_statistics_2019}, with a single action leading to the same discrete reward distribution for every step. We consider a Bernouilli reward distribution $\mathcal B(0.5)$ for each state  so that the number of atoms for the return only grows linearly\footnote{At round $h\in [H]$, the support of the return is $\{0, 1,...,h \}$, so $h$ atoms.} with the number of steps, which allows to compute the exact distribution easily.

We compare the distributions obtained with exact dynamic programming and the approximate distribution obtained by Alg~\ref{alg:eval_rec}, with a quantile projection with resolution $N=1000$. Note that even at early stages, when the true distribution has less atoms than the resolution, the exact and approximate distributions differ due to the weights of the atoms in the quantile projection. Figure~\ref{fig:fig_eval} (Right) reports the Wasserstein distance between the two distributions: the cumulative projection approximation error (dashed blue), the true error between the current exact and approximate distributions (solid blue) and the theoretical bound (red). Fundamentally, the proof of Prop.~\ref{prop:error_dist} upper bounds the distance between distributions by the cumulative projection error so we plot this quantity to help validating it.  

We also empirically validate Theorem~\ref{th:eval_stat_func_error} by computing the CVaR($\alpha$) for $\alpha\in\{0.1,0.25\}$, corresponding respectively to distorted means with Lipschitz constants $L=\{10,4\}$. We compute these statistics for both distributions and report the maximal error together with the theoretical bound, re-scaled\footnote{
Scaling by a constant factor allows us to show the corresponding quadratic trends.} by a factor 5. Figure~\ref{fig:fig_eval} (Left) shows an impressive correspondence of the theory and the empirical results despite a constant multiplicative gap.

\section{Planning}
\label{sec:planning}

Planning refers to the problem of returning a policy that optimizes our objective for a given model and reward function (or distribution in DistRL). It shares with policy evaluation the property to be grounded on a Bellman equation: see Eq.~\eqref{eq:expected-q-function-opt} for the classical expected return, which leads to efficient computations by dynamic programming. 

For other statistical functionals of the cumulated reward, however, can the optimal policy be computed efficiently? The main result of this section characterizes the family of functionals that can be exactly and efficiently optimized by Dynamic Programming.

In the previous section, we recalled that Bellman-closed sets of utilities can be efficiently computed by DP as long as all the values of the utilities in the Bellman-closed family are computed together. For the planning problem, however, we only want to optimize one utility so we cannot consider families as previously. Under this constraint, only exponential and linear expected utilities are Bellman closed and thus can verify a Bellman Equation. 
In fact, for the exponential utilities, such Bellman Equation exists and allows for the planning problem to be solved efficiently \citep{howard_risk-sensitive_1972}:
\begin{align} 
Q_{h}^{\lambda}(x,a) = U_{\exp}^\lambda(R_h(x,a)) + \frac{1}{\lambda}\log\left[\sum_{x'}p_h(x,a,x')\exp\left(\lambda\max_{a'}Q_{h+1}^{\lambda}(x',a')\right)\right]\\
\text{with } Q_{H+1}^\lambda(x,a) = 0\;.\nonumber
\end{align}
However, the question of the existence of Optimal Bellman Equations for non-utility functionals remains open (e.g. quantiles). More generally, an efficient planning strategy is not known. To address these questions, we consider the most general framework, DistRL, and recall the theoretical DP equations for any statistical functional $s$ in the Pseudo\footnote{This theoretical algorithm handles the full distribution of the return at each step, which cannot be done in practice.}-Algorithm~\ref{alg:opt_rec}. DistRL offers the most comprehensive, or `lossless', approach, so if a statistical functional cannot be optimized with Alg.~\ref{alg:opt_rec}, then there cannot exist a Bellman Operator to perform exact planning. 
\begin{algorithm}
    \caption{Pseudo-Algorithm: Exact Planning with Distributional RL\label{alg:opt_rec}}
    \begin{algorithmic}[1]
    \State \textbf{Input:} model $p$, reward $R$, statistical functional $s$
    \State \textbf{Data:} $\qdist \in \RR^{H |\SSS| |\AAA| N}, \mes \in \RR^{H |\SSS| N}$
    \State $\forall x\in \SSS,\quad \mes_{H+1}^x = \delta_0$
    \For{$h = H \rightarrow 1$}
    \State $\dist{h}{}{(x,a)} = \varrho_h^{(x,a)} * \sum_{x'} p_h^a(x,x')\mes_{h+1}^{x'}\quad \forall x,a\in \SSS\times\AAA$ 
    \State $\mes_h^x = \dist{h}{}{(x,a^*)}\;,\quad a^* \in \argmax_a \sfunc(\dist{h}{}{(x,a)})\quad \forall x \in \SSS$
    \EndFor
    \State \textbf{Output:} $\dist{h}{}{(x,a)} \; \forall x,a,h$
    \end{algorithmic}
    \end{algorithm}

We formalize this idea with the new concept of \emph{Bellman Optimizable} statistical functionals:
\begin{definition}[Bellman Optimizable statistical functional]
    A statistical functional $s$ is called \emph{Bellman optimizable} if, for any MDP $\mathcal M$, the Pseudo-Algorithm~\ref{alg:opt_rec} outputs an optimal return distribution $\qdist=\dist{}{*}{}$ that verifies:
    \begin{align}
        \forall x,a,h,\quad s(\dist{h}{*}{(x,a)}) = \sup_\pi s(\dist{h}{\pi}{(x,a)})\;.
    \end{align}

\end{definition}

\begin{remark*}
    This definition is equivalent to the satisfaction of an Optimal Distributionnal Bellman equation. Indeed,
    a statistical functional $s$ is Bellman Optimizable if and only if, for any $(\SSS, \AAA, \varrho, p, \qdist)$, $s$ verifies
    \begin{align*}
\sup_{a_{x'}} s \left( \varrho \ast \sum_{x'} p(x')\dist{}{}{(x',a_{x'})}\right) = s \left( \varrho \ast \sum_{x'} p(x')\dist{}{}{(x',a^*_{x'})}\right)
    \end{align*}
    with $a^\star_{x} \in \arg\max_a s(\dist{}{}{(x,a)})$
\end{remark*}

We can now state our main results that characterizes all the \emph{Bellman optimizable} statistical functionals. First, we prove that such a functional must satisfy two important properties.

\begin{lemma}
\label{lem:bellman_opt_prop}
    A \emph{Bellman optimizable} functional $\sfunc$ satifies the two following properties:
    \begin{itemize}
        \item Independence Property: If $\mes_1, \mes_2 \in \PPPP(\RR)$ are such that $ \ \sfunc(\mes_1) \geq \sfunc(\mes_2)$, then
        $$\forall \mes_3 \in \PPPP(\RR), \forall \lambda \in [0,1],\quad \sfunc(\lambda \mes_1 + (1-\lambda) \mes_3) \geq \sfunc( \lambda \mes_2 + (1-\lambda) \mes_3))\;.$$
        \item Translation Property: Let $\tau_c$ denote the translation on the set of distributions: $\tau_c\delta_x = \delta_{x+c}$. If  $\mes_1, \mes_2 \in \PPPP(\RR)$ are such that $\ \sfunc(\mes_1) \geq \sfunc(\mes_2)$, then
        $$\forall c \in \RR, \quad \sfunc(\tau_c \mes_1) \geq \sfunc(\tau_c \mes_2)\;.$$
    \end{itemize}
\end{lemma}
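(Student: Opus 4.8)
The plan is to derive each property by constructing a minimal MDP — essentially a single decision point followed by a fixed future — in which the optimality guarantee of Pseudo-Algorithm~\ref{alg:opt_rec} forces the claimed inequality. The guiding principle is that if $\sfunc$ is Bellman optimizable, then at every state the greedy choice $a^\star_x \in \arg\max_a \sfunc(\dist{}{}{(x,a)})$ must remain optimal after the mixing-and-convolution step that the distributional Bellman operator performs; the two properties are exactly the two ways that operator transforms the candidate distributions (stochastic transitions produce mixtures, deterministic reward shifts produce translations).

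For the \textbf{Independence Property}, suppose $\sfunc(\mes_1)\geq\sfunc(\mes_2)$ and fix $\mes_3$ and $\lambda\in[0,1]$. I would build a two-step MDP with one initial state $x_0$, from which a single action leads (at horizon $h=1$) with probability $\lambda$ to a state $y$ and with probability $1-\lambda$ to a state $z$; all rewards are identically $0$. At state $y$ there are two available actions whose return distributions are $\mes_1$ and $\mes_2$ respectively, and at state $z$ there is a single action with return distribution $\mes_3$. Running Pseudo-Algorithm~\ref{alg:opt_rec}: at $y$ the greedy rule selects the action with distribution $\mes_1$ (since $\sfunc(\mes_1)\geq\sfunc(\mes_2)$), so $\mes_1^y=\mes_1$; then at $x_0$ the produced distribution is $\delta_0 \ast(\lambda\mes_1+(1-\lambda)\mes_3)=\lambda\mes_1+(1-\lambda)\mes_3$. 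Bellman optimizability says this must realize $\sup_\pi \sfunc(\dist{0}{\pi}{(x_0,a)})$; but the policy that instead picks the $\mes_2$-action at $y$ yields $\lambda\mes_2+(1-\lambda)\mes_3$, so optimality gives $\sfunc(\lambda\mes_1+(1-\lambda)\mes_3)\geq\sfunc(\lambda\mes_2+(1-\lambda)\mes_3)$, as desired. One has to check the edge cases $\lambda\in\{0,1\}$ separately, but they are trivial (for $\lambda=0$ both sides equal $\sfunc(\mes_3)$).

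For the \textbf{Translation Property}, the construction is even simpler: a single-step MDP where the initial state $x_0$ has one action, with a deterministic reward $c$, leading to a terminal state with two actions of return distributions $\mes_1$ and $\mes_2$. Greedy selection picks $\mes_1$, the convolution with $\delta_c$ produces $\tau_c\mes_1$, and optimality against the alternative policy yields $\sfunc(\tau_c\mes_1)\geq\sfunc(\tau_c\mes_2)$. The \textbf{main obstacle} is not any single computation but making the reduction airtight: one must confirm that these toy instances are legitimate finite-horizon tabular MDPs covered by the definition (arbitrary distributions $\mes_i$ are admissible as return distributions of a terminal action by placing the reward there), that the "for any MDP" quantifier in the definition of Bellman optimizability indeed licenses plugging in these instances, and that the $\arg\max$ ties — when $\sfunc(\mes_1)=\sfunc(\mes_2)$ — are handled so that the stated non-strict inequality is what emerges regardless of how ties are broken. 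Once the bookkeeping of horizons and terminal conditions ($\mes_{H+1}^x=\delta_0$) is aligned with the algorithm, both properties drop out immediately.
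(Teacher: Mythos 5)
Your proposal is correct and matches the paper's argument: the paper uses exactly the same two toy MDPs (a stochastic branch to a two-action state versus a fixed $\mes_3$ branch for Independence, and a deterministic reward $c$ followed by a two-action choice for Translation), merely phrased as a proof by contradiction rather than your direct appeal to the optimality guarantee. The bookkeeping concerns you flag (admissibility of the instances, tie-breaking in the $\arg\max$) are real but equally present in the paper's version and do not affect the conclusion.
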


Indeed, the expectation and the exponential utility both satisfy these properties (see Appendix~\ref{ap:lin_exp_utilities_properties}). 
Each property is implied by an aspect of the Distributional Bellman Equation (Alg.~\ref{alg:opt_rec}, line 5) and the proof (in Appendix~\ref{ap:proof_main_thm}) unveils these important consequences of the recursion identities. Fundamentally, they follow from the Markovian nature of policies optimized this way: the choice of the action in each state should be independent of other states and rely only on the knowledge of the next-state value distribution.

The Independence property states that, for Bellman optimizable functionals, the value of each next state should not depend on that of any other value in the convex combination in the rightmost term of the convolution. 
In turn, the Translation property is associated to the leftmost term, the reward, and it imposes that, for Bellman optimizable functionals, the decision on the best action is independent of the previously accumulated reward.

The Independence property is related to expected utilities \citep{von_neumann_theory_1944}. Any expected utility verifies this property (Appendix~\ref{ap:lin_exp_utilities_properties}) but, most importantly, the Expected Utility Theorem (also known as the Von Neumann Morgenstein theorem) implies that any continuous statistical functional $s$ verifying the Independence property can be reduced to an expected utility. This means that for any such statistical functional $s$,  there exists $f$ continuous such that $\forall \mes_1, \mes_2 \in \PPPP(\RR)$,  we have $\sfunc(\mes_1) > \sfunc(\mes_2) \Longleftrightarrow \util_f(\mes_1) > \util_f(\mes_2)$ \citep{von_neumann_theory_1944,GRANDMONT197245}.

This result directly narrows down the family of Bellman optimizable functionals to utilities. Indeed, although other functionals might potentially be optimized using the Bellman equation, addressing the problem on utilities is adequate to characterize all possible behaviors. For instance, moment-optimal policies that can be found through dynamic programming, can also be found by optimizing an exponential utility function. The next task is therefore to identify all the utilities that satisfy the second property. We demonstrate that, apart from the mean and exponential utilities, no other $W_1$-continuous functional satisfies this property.
\begin{theorem}
\label{th:affine_exp_utilities}
Let $\varrho$ be a return distribution. The only $W_1$-\emph{continuous} Bellman Optimizable statistical functionals of the cumulated return are exponential utilities $U_{\exp} (\varrho) = \frac{1}{\lambda}\log\EEE{\varrho}{\exp(\lambda R)}$ \;for $\lambda\in\mathbb{R}$, with the special case of the expectation $\EEE{\varrho}{R}$ when $\lambda=0$.
\end{theorem}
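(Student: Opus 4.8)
By Lemma~\ref{lem:bellman_opt_prop}, a Bellman optimizable functional $\sfunc$ satisfies both the Independence and Translation properties. The plan is to combine these two constraints. First, invoke the Expected Utility (von Neumann--Morgenstern) theorem as recalled after Lemma~\ref{lem:bellman_opt_prop}: continuity plus the Independence property force $\sfunc$ to be ordinally equivalent to an expected utility $\util_f(\mes)=\int f\,\dd\mes$ for some continuous, strictly increasing $f$ (strict monotonicity coming from the fact that $\sfunc$ is a genuine statistical functional of the return, respecting first-order stochastic dominance on Diracs). Since we only care about $\sfunc$ up to the induced preference order, we may replace $\sfunc$ by $\util_f$ and work entirely with $f$. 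The remaining and central task is then: which continuous increasing $f$ give an $\util_f$ that also satisfies the Translation property?

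\textbf{Reducing the Translation property to a functional equation on $f$.} The Translation property says: for all $\mes_1,\mes_2$ and all $c\in\RR$, $\util_f(\mes_1)\ge \util_f(\mes_2)\iff \util_f(\tau_c\mes_1)\ge \util_f(\tau_c\mes_2)$. The key step is to specialize to two-point distributions. Take $\mes_1 = \delta_a$ (a Dirac) and $\mes_2 = p\,\delta_b + (1-p)\,\delta_d$; the indifference curves of $\util_f$ are determined by the equation $f(a) = p f(b) + (1-p) f(d)$, and translation-invariance of these indifference curves means $f(a+c) = p f(b+c) + (1-p) f(d+c)$ must hold simultaneously for all $c$ whenever the untranslated equation holds. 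Differentiating formally in $c$ (or taking finite differences, to stay rigorous under mere continuity) and eliminating $p$ between the two relations yields a multiplicative/Cauchy-type functional equation for the increments of $f$ — concretely, that $\frac{f(x+c)-f(y+c)}{f(x)-f(y)}$ depends only on $c$, not on $x,y$. Writing $g(c)$ for this ratio, one gets $g(c_1+c_2)=g(c_1)g(c_2)$, so by continuity $g(c)=e^{\lambda c}$ for some $\lambda\in\RR$ (or $g\equiv 1$). Feeding this back in, $f(x+c)-f(y+c) = e^{\lambda c}(f(x)-f(y))$ forces $f(x) = \alpha e^{\lambda x} + \beta$ when $\lambda\neq 0$, and $f$ affine when $\lambda = 0$. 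Finally, note that $\util_f$ with $f(x)=\alpha e^{\lambda x}+\beta$ induces exactly the same preference order as $U_{\exp}^{\lambda}$, and $f$ affine induces the expectation — concluding the characterization, since conversely these functionals are checked to be $W_1$-continuous and Bellman optimizable (the converse direction is the already-cited result of \citet{howard_risk-sensitive_1972} together with the trivial case of the mean).

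\textbf{Main obstacle.} The delicate point is the functional-equation step under only $W_1$-continuity (no a priori differentiability of $f$ or $\sfunc$). I would handle this by working with finite differences rather than derivatives: fixing two base points and letting $c$ range over a dense set, one shows the ratio $g(c)$ is well-defined, multiplicative, and locally bounded (hence continuous, hence exponential) using the standard Cauchy-equation regularity argument. A secondary subtlety is legitimately restricting attention to finitely-supported distributions: one must check that the Independence and Translation properties on all of $\PPPP(\RR)$ are already pinned down by their restrictions to two- and three-point laws (true because finitely-supported measures are $W_1$-dense and $\sfunc$ is continuous), so that the functional equation derived on Diracs actually characterizes $\sfunc$ globally. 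I also need to be slightly careful that the von Neumann--Morgenstern reduction is being applied on the space of compactly-supported (or suitably integrable) distributions where $\util_f$ is well-defined and finite, which is consistent with the boundedness assumptions used throughout Section~\ref{sec:eval}.
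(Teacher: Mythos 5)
Your proposal is correct and shares the paper's overall architecture — reduce to an expected utility $\util_f$ via the von Neumann--Morgenstern theorem using the Independence property, then extract a functional equation on $f$ from the Translation property applied to a two-point law set indifferent to a Dirac — but the way you resolve that functional equation is genuinely different. The paper specializes to $\mes_1=\tfrac{1}{2}(\delta_0+\delta_h)$ versus $\mes_2=\delta_{\phi(h)}$ with $\phi(h)=f^{-1}\bigl(\tfrac{1}{2}(f(0)+f(h))\bigr)$, obtains the midpoint-type identity $\tfrac{1}{2}\bigl(f(x)+f(x+h)\bigr)=f(x+\phi(h))$ for all $x$, and then \emph{assumes $f$ is twice continuously differentiable} ("by a density argument") so as to differentiate twice in $h$ and land on the ODE $\tfrac{1}{4}f''=\phi''(0)f'$, whose solutions are the affine and exponential families. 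You instead eliminate the mixture weight $p$ between the indifference relation and its translate to show that the increment ratio $\bigl(f(x+c)-f(y+c)\bigr)/\bigl(f(x)-f(y)\bigr)$ depends only on $c$, is multiplicative in $c$, and hence equals $e^{\lambda c}$ by the standard Cauchy-equation regularity argument; feeding this back gives $f(x)=\alpha e^{\lambda x}+\beta$ or $f$ affine with no differentiability assumption at all. What your route buys is precisely rigor at the paper's weakest point: the "density argument" justifying $f\in C^2$ is not spelled out there (it is not obvious that a merely continuous solution of the midpoint identity can be approximated by smooth solutions), whereas your finite-difference/Cauchy treatment needs only the continuity and monotonicity that vNM already delivers. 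The one detail to nail down on your side is the degenerate case where $f(b)=f(d)$ for the chosen base points (handled by noting that $g$ never vanishes, since $g(c)g(-c)=g(0)=1$, so partial constancy of $f$ is excluded unless $f$ is globally constant); the paper treats the constant case separately and you should too. Your closing remarks on restricting to finitely supported laws by $W_1$-density and on citing the Howard--Matheson direction for the converse match what the paper does implicitly.
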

Of course, if $U(\rho)$ is a utility and $\psi$ is a monotonous scalar mapping, $\psi(U(\rho))$ is an equivalent utility: one should understand in the previous theorem that a $W_1$-\emph{continuous} Bellman Optimizable statistical functional is equivalent  to $U_{\exp} (\varrho)$ for some $\lambda\in\RR$. We chose here to define $U_{\exp} (\varrho) = \frac{1}{\lambda}\log\EEE{\varrho}{\exp(\lambda R)}$ with the $\log$ and the factor $1/\lambda$ since it results in a normalized utility that tends to the expectation when $\lambda$ goes to $0$.
The full proof of Theorem~\ref{th:affine_exp_utilities} is provided in Appendix~\ref{ap:proof_main_thm}. 

We make a few important observations. First, this result shows that algorithms using Bellman updates to optimize any continuous functionals other than the exponential utility cannot guarantee optimality. The theorem does not apply to non-continuous functionals, but Lemma~\ref{lem:bellman_opt_prop} still does. For instance, the quantiles are not $W_1$-continuous so Theorem~\ref{th:affine_exp_utilities} does not apply, but it is easy to prove that they do not verify the Independence Property and thus are not Bellman Optimizable.
Also, there might also exist other optimizable functionals, like moments, but they must first be reduced to exponential or linear utilities.

Most importantly, while in theory, DistRL provides the most general framework for optimizing policies via dynamic programming, our result shows that in fact, the only utilities that can be exactly and efficiently optimized do not require to resort to DistRL. This certainly does not question the very purpose of DistRL, which has been shown to play important roles in practice to regularize or stabilize policies and to perform deeper exploration 
 \citep{bellemare_distributional_2017,hessel_rainbow_2018}. 
Some advantages of learning the distribution lies in the enhanced \emph{robustness} offered in the richer information learned \citep{rowland_statistical_2023}, particularly when utilizing neural networks for function approximation \citep{dabney_distributional_2018, barth2018distributed,lyle_comparative_2019}.

\section{Q-Learning Exponential Utilities}

\begin{algorithm}[tbh]
    \centering
    \caption{Q-Learning for Linear and Exponential Utilities \label{alg:q-learning}}
    \begin{algorithmic}[1]
        \State \textbf{Input:} $(\alpha_t)_{t\in \NN}$, transition and reward generator. $Q_h(x,a) \gets H \;, \forall (x,a,h) \in \SSS\times\AAA\times [H]$
        \State \textbf{Utilities:} Linear ($Z\mapsto \lambda Z + b)$ or Exponential ($Z\mapsto \log(\mathbb{E}\exp(\lambda Z))/\lambda$)
        \For{episode $K=1,\dots,K$}
            \State Observe $x_1 \in \SSS$
            \For{step $h=1,\dots,H$}
            \State Choose action $a_h \in \argmax_{a\in \mathcal{A}} Q_h(x_h,a)$
            \State Observe reward $r_h$ and transition $x_{h+1}$ and update for chosen objective:
            \State \small{Linear Util.: $Q_h(x_h, a_h) \leftarrow (1-\alpha_k)Q_h(x_h, a_h) + \alpha_k [\lambda (r_h + \max_{a'} Q_{h+1}(x_{h+1}, a'))+ b]$}
            \State \small{Exponential Util.: $Q_h(x_h,a_h) \leftarrow \frac{1}{\lambda}\log\left[(1-\alpha_k)e^{\lambda\cdot Q_h(x_h,a_h)} + \alpha_k e^{\lambda[r_h + \max_{a'}Q_h(x_{h+1},a')]}\right]$}
            \EndFor
        \EndFor
        \State \textbf{Output:} $Q_h(x, a) \forall x,a$
    \end{algorithmic}
\end{algorithm}

The previous sections consider the the model, i.e. the reward and transition functions, be known.
Yet in most practical situations, those are either approximated or learnt\footnote{Either explicitly (model-based RL) or implicitly (model-free RL, considered here).}.
After addressing policy evaluation (Section~\ref{sec:eval}) and planning (Section~\ref{sec:planning}), we conclude here the argument of this paper by addressing the question of learning the statistical functionals of Theorem~\ref{th:affine_exp_utilities}. 
In fact, we simply highlight a lesser known version of Q-Learning \cite{watkins1992q} that extend this celebrated algorithm to exponential utilities. 
We provide the pseudo-code for the algorithm proposed by \citet{borkar2002q,borkar2010learning} with the relevant utility-based updates in Alg.~\ref{alg:q-learning}. We refer to these seminal works for convergence proofs.
Linear utility updates (line 8) differ only slightly from classical ones for expected return optimization, which have been shown to lead to the optimal value asymptotically \citep{watkins1992q}.

\section{Discussions and Related Work}
\label{sec:disc_related}

\paragraph{The Discounted Framework}
 We focused in this article on undiscounted MDPs, and it is important to note that the results differ for discounted scenarios. The crucial difference is that the family of exponential utilities no longer retains Bellman Closed or Bellman Optimizable properties due to the introduction of the discount factor $\gamma$ \citep{rowland_statistics_2019}. 
When it comes to Bellman Optimization, the necessary translation property becomes an affine property : $\forall c,\gamma,\ \sfunc(\tau_c^\gamma\mes_1) > \sfunc(\tau_c^\gamma\mes_2)$ where $\tau_c^\gamma$ is the affine operator such that $\tau_c^\gamma\delta_x = \delta_{\gamma x + c}$. This property is not upheld by the exponential utility. 
Nonetheless, there exists a method to optimize the exponential utility through dynamic programming in discounted MDPs \citep{chung_discounted_1987}. This approach requires modifying the functional to optimize at each step (the step $h$ is optimized with the utility $x \mapsto \exp(\gamma^{-h}\lambda\ x)$), but it also implies a loss of policy stationarity, property usually obtained in dynamic programming for discounted finite-horizon MDPs \citep{sutton2018reinforcement}.

\paragraph{Utilizing functionals to optimize expected return.}
DistRL has also been used in Deep Reinforcement learning to optimize non-Bellman-optimizable functionals such as distorted means\citep{ma_dsac_2020,dabney_implicit_2018}. 
While, as we proved so, such algorithms cannot lead to optimal policies in terms of these functionals, experiments show that in some contexts they can lead to better expected return and faster convergence in practice. The change of functional can be interpreted as a change in the exploration process, and the resulting risk-sensitive behaviors seem to be relevant in adequate environments.

\paragraph{Dynamic programming for the optimization of other functionals}
To optimize other statistical functionals such as CVaR and other utilities such as moments with Dynamic Programming, \citet{bauerle_markov_2011} and \citet{bauerle_more_2014} propose to extend the state space of the original MDP to $\SSS'=\SSS \times \RR$ by theoretically adding a continuous dimension to store the current cumulative rewards. 
This idea does not contradict our results, and the resulting algorithms remain empirically much more expensive.

Another recent thread of ideas to optimize functionals of the reward revolve around the dual formulation of RL through the empirical state distribution \citep{pmlr-v97-hazan19a}. Algorithms can be derived by noticing that utilities like the CVar are equivalent to solving a convex RL problem \citep{mutti2023convex}.

\section{Conclusion}
Our work closes an important open problem in the theory of MDPs: we exactly characterize the families of statistical functionals that can be evaluated and optimized by dynamic programming. We also put into perspective the DistRL framework: the only functionals of the return that can be optimized with DistRL can actually be handled exactly by dynamic programming.
Its benefit lies elsewhere, and notably in the improved stability of behavioral properties it allows.
We believe that, by narrowing down the avenues to explain its empirical successes, our work can contribute to clarify the further research to conduct on the theory of DistRL.

\section*{Acknowledgements}

Alexandre Marthe and Aurélien Garivier acknowledge the support of the Project IDEXLYON of the University of Lyon, in the framework of the Programme Investissements d'Avenir (ANR-16-IDEX-0005), Chaire SeqALO (ANR-20-CHIA-0020-01), and Project FOUNDRY in PEPR-IA. \\
Claire Vernade is funded by the Deutsche Forschungsgemeinschaft (DFG) under both the project 468806714 of the Emmy Noether Programme and under Germany’s Excellence Strategy – EXC number 2064/1 – Project number 390727645. Claire Vernade also thanks the international Max Planck Research School for Intelligent Systems (IMPRS-IS) and Seek.AI for their support.
\bibliographystyle{abbrvnat}
\bibliography{bib_neurips,general_bib}

\begin{thebibliography}{43}
\providecommand{\natexlab}[1]{#1}
\providecommand{\url}[1]{\texttt{#1}}
\expandafter\ifx\csname urlstyle\endcsname\relax
  \providecommand{\doi}[1]{doi: #1}\else
  \providecommand{\doi}{doi: \begingroup \urlstyle{rm}\Url}\fi

\bibitem[Achab and Neu(2021)]{achab_robustness_2021}
M.~Achab and G.~Neu.
\newblock Robustness and risk management via distributional dynamic
  programming, Dec. 2021.
\newblock arXiv:2112.15430 [cs, math].

\bibitem[Auer(2002)]{auer2002using}
P.~Auer.
\newblock Using confidence bounds for exploitation-exploration trade-offs.
\newblock \emph{Journal of Machine Learning Research}, 3\penalty0
  (Nov):\penalty0 397--422, 2002.

\bibitem[Barth-Maron et~al.(2018)Barth-Maron, Hoffman, Budden, Dabney, Horgan,
  Tb, Muldal, Heess, and Lillicrap]{barth2018distributed}
G.~Barth-Maron, M.~W. Hoffman, D.~Budden, W.~Dabney, D.~Horgan, D.~Tb,
  A.~Muldal, N.~Heess, and T.~Lillicrap.
\newblock Distributed distributional deterministic policy gradients.
\newblock \emph{arXiv preprint arXiv:1804.08617}, 2018.

\bibitem[Bellemare et~al.(2017)Bellemare, Dabney, and
  Munos]{bellemare_distributional_2017}
M.~G. Bellemare, W.~Dabney, and R.~Munos.
\newblock A distributional perspective on reinforcement learning.
\newblock In \emph{International conference on machine learning}, pages
  449--458. PMLR, 2017.

\bibitem[Bellemare et~al.(2023)Bellemare, Dabney, and Rowland]{bdr2023}
M.~G. Bellemare, W.~Dabney, and M.~Rowland.
\newblock \emph{Distributional Reinforcement Learning}.
\newblock MIT Press, 2023.
\newblock \url{http://www.distributional-rl.org}.

\bibitem[Bernhard et~al.(2019)Bernhard, Pollok, and Knoll]{8813791}
J.~Bernhard, S.~Pollok, and A.~Knoll.
\newblock Addressing inherent uncertainty: Risk-sensitive behavior generation
  for automated driving using distributional reinforcement learning.
\newblock In \emph{2019 IEEE Intelligent Vehicles Symposium (IV)}, pages
  2148--2155, 2019.
\newblock \doi{10.1109/IVS.2019.8813791}.

\bibitem[Borkar(2002)]{borkar2002q}
V.~S. Borkar.
\newblock Q-learning for risk-sensitive control.
\newblock \emph{Mathematics of operations research}, 27\penalty0 (2):\penalty0
  294--311, 2002.

\bibitem[Borkar(2010)]{borkar2010learning}
V.~S. Borkar.
\newblock Learning algorithms for risk-sensitive control.
\newblock In \emph{Proceedings of the 19th International Symposium on
  Mathematical Theory of Networks and Systems--MTNS}, 2010.

\bibitem[Bäuerle and Ott(2011)]{bauerle_markov_2011}
N.~Bäuerle and J.~Ott.
\newblock Markov {Decision} {Processes} with {Average}-{Value}-at-{Risk}
  criteria.
\newblock \emph{Mathematical Methods of Operations Research}, 74\penalty0
  (3):\penalty0 361--379, Dec. 2011.
\newblock ISSN 1432-2994, 1432-5217.
\newblock \doi{10.1007/s00186-011-0367-0}.

\bibitem[Bäuerle and Rieder(2014)]{bauerle_more_2014}
N.~Bäuerle and U.~Rieder.
\newblock More {Risk}-{Sensitive} {Markov} {Decision} {Processes}.
\newblock \emph{Mathematics of Operations Research}, 39\penalty0 (1):\penalty0
  105--120, Feb. 2014.
\newblock ISSN 0364-765X.
\newblock \doi{10.1287/moor.2013.0601}.
\newblock Publisher: INFORMS.

\bibitem[Chung and Sobel(1987)]{chung_discounted_1987}
K.-J. Chung and M.~J. Sobel.
\newblock Discounted {MDP}’s: {Distribution} {Functions} and {Exponential}
  {Utility} {Maximization}.
\newblock \emph{SIAM Journal on Control and Optimization}, 25\penalty0
  (1):\penalty0 49--62, Jan. 1987.
\newblock ISSN 0363-0129, 1095-7138.
\newblock \doi{10.1137/0325004}.

\bibitem[Dabney et~al.(2018{\natexlab{a}})Dabney, Ostrovski, Silver, and
  Munos]{dabney_implicit_2018}
W.~Dabney, G.~Ostrovski, D.~Silver, and R.~Munos.
\newblock Implicit {Quantile} {Networks} for {Distributional} {Reinforcement}
  {Learning}.
\newblock In \emph{Proceedings of the 35th {International} {Conference} on
  {Machine} {Learning}}, pages 1096--1105. PMLR, July 2018{\natexlab{a}}.
\newblock ISSN: 2640-3498.

\bibitem[Dabney et~al.(2018{\natexlab{b}})Dabney, Rowland, Bellemare, and
  Munos]{dabney_distributional_2018}
W.~Dabney, M.~Rowland, M.~Bellemare, and R.~Munos.
\newblock Distributional {Reinforcement} {Learning} {With} {Quantile}
  {Regression}.
\newblock \emph{Proceedings of the AAAI Conference on Artificial Intelligence},
  32\penalty0 (1), Apr. 2018{\natexlab{b}}.
\newblock ISSN 2374-3468.
\newblock Number: 1.

\bibitem[Fawzi et~al.(2022)Fawzi, Balog, Huang, Hubert, Romera-Paredes,
  Barekatain, Novikov, R.~Ruiz, Schrittwieser, Swirszcz, Silver, Hassabis, and
  Kohli]{fawzi_discovering_2022}
A.~Fawzi, M.~Balog, A.~Huang, T.~Hubert, B.~Romera-Paredes, M.~Barekatain,
  A.~Novikov, F.~J. R.~Ruiz, J.~Schrittwieser, G.~Swirszcz, D.~Silver,
  D.~Hassabis, and P.~Kohli.
\newblock Discovering faster matrix multiplication algorithms with
  reinforcement learning.
\newblock \emph{Nature}, 610\penalty0 (7930):\penalty0 47--53, Oct. 2022.
\newblock ISSN 1476-4687.
\newblock \doi{10.1038/s41586-022-05172-4}.
\newblock Number: 7930 Publisher: Nature Publishing Group.

\bibitem[Föllmer and Schied(2016)]{FollmerSchied+2016}
H.~Föllmer and A.~Schied.
\newblock \emph{Stochastic Finance: An Introduction in Discrete Time}.
\newblock De Gruyter, Berlin, Boston, 2016.
\newblock ISBN 9783110463453.
\newblock \doi{doi:10.1515/9783110463453}.

\bibitem[Ghavamzadeh et~al.(2020)Ghavamzadeh, Lazaric, and
  Pirotta]{glp2020aaaitutorial}
M.~Ghavamzadeh, A.~Lazaric, and M.~Pirotta.
\newblock Exploration in reinforcement learning.
\newblock Tutorial at AAAI'20, 2020.
\newblock URL \url{https://rlgammazero.github.io/}.

\bibitem[Grandmont(1972)]{GRANDMONT197245}
J.-M. Grandmont.
\newblock Continuity properties of a von neumann-morgenstern utility.
\newblock \emph{Journal of Economic Theory}, 4\penalty0 (1):\penalty0 45--57,
  1972.
\newblock ISSN 0022-0531.
\newblock \doi{https://doi.org/10.1016/0022-0531(72)90161-5}.

\bibitem[Hazan et~al.(2019)Hazan, Kakade, Singh, and
  Van~Soest]{pmlr-v97-hazan19a}
E.~Hazan, S.~Kakade, K.~Singh, and A.~Van~Soest.
\newblock Provably efficient maximum entropy exploration.
\newblock In K.~Chaudhuri and R.~Salakhutdinov, editors, \emph{Proceedings of
  the 36th International Conference on Machine Learning}, volume~97 of
  \emph{Proceedings of Machine Learning Research}, pages 2681--2691. PMLR,
  09--15 Jun 2019.
\newblock URL \url{https://proceedings.mlr.press/v97/hazan19a.html}.

\bibitem[Hessel et~al.(2018)Hessel, Modayil, van Hasselt, Schaul, Ostrovski,
  Dabney, Horgan, Piot, Azar, and Silver]{hessel_rainbow_2018}
M.~Hessel, J.~Modayil, H.~van Hasselt, T.~Schaul, G.~Ostrovski, W.~Dabney,
  D.~Horgan, B.~Piot, M.~Azar, and D.~Silver.
\newblock Rainbow: {Combining} {Improvements} in {Deep} {Reinforcement}
  {Learning}.
\newblock \emph{Proceedings of the AAAI Conference on Artificial Intelligence},
  32\penalty0 (1), Apr. 2018.
\newblock \doi{10.1609/aaai.v32i1.11796}.

\bibitem[Howard and Matheson(1972)]{howard_risk-sensitive_1972}
R.~A. Howard and J.~E. Matheson.
\newblock Risk-{Sensitive} {Markov} {Decision} {Processes}.
\newblock \emph{Management Science}, 18\penalty0 (7):\penalty0 356--369, Mar.
  1972.

\bibitem[Jin et~al.(2018)Jin, Allen-Zhu, Bubeck, and Jordan]{jin2018q}
C.~Jin, Z.~Allen-Zhu, S.~Bubeck, and M.~I. Jordan.
\newblock Is q-learning provably efficient?
\newblock \emph{Advances in neural information processing systems}, 31, 2018.

\bibitem[Lazic et~al.(2018)Lazic, Boutilier, Lu, Wong, Roy, Ryu, and
  Imwalle]{lazic2018data}
N.~Lazic, C.~Boutilier, T.~Lu, E.~Wong, B.~Roy, M.~Ryu, and G.~Imwalle.
\newblock Data center cooling using model-predictive control.
\newblock \emph{Advances in Neural Information Processing Systems}, 31, 2018.

\bibitem[Liang and Luo(2022)]{liang_bridging_2022}
H.~Liang and Z.-Q. Luo.
\newblock Bridging {Distributional} and {Risk}-sensitive {Reinforcement}
  {Learning} with {Provable} {Regret} {Bounds}, Oct. 2022.
\newblock arXiv:2210.14051 [cs, stat] version: 1.

\bibitem[Lyle et~al.(2019)Lyle, Bellemare, and Castro]{lyle_comparative_2019}
C.~Lyle, M.~G. Bellemare, and P.~S. Castro.
\newblock A {Comparative} {Analysis} of {Expected} and {Distributional}
  {Reinforcement} {Learning}.
\newblock \emph{Proceedings of the AAAI Conference on Artificial Intelligence},
  33\penalty0 (01):\penalty0 4504--4511, July 2019.
\newblock ISSN 2374-3468.
\newblock \doi{10.1609/aaai.v33i01.33014504}.
\newblock Number: 01.

\bibitem[Ma et~al.(2020)Ma, Xia, Zhou, Yang, and Zhao]{ma_dsac_2020}
X.~Ma, L.~Xia, Z.~Zhou, J.~Yang, and Q.~Zhao.
\newblock {DSAC}: {Distributional} {Soft} {Actor} {Critic} for
  {Risk}-{Sensitive} {Reinforcement} {Learning}, June 2020.
\newblock arXiv:2004.14547 [cs].

\bibitem[Mirhoseini et~al.(2020)Mirhoseini, Goldie, Yazgan, Jiang, Songhori,
  Wang, Lee, Johnson, Pathak, Bae, et~al.]{mirhoseini2020chip}
A.~Mirhoseini, A.~Goldie, M.~Yazgan, J.~Jiang, E.~Songhori, S.~Wang, Y.-J. Lee,
  E.~Johnson, O.~Pathak, S.~Bae, et~al.
\newblock Chip placement with deep reinforcement learning.
\newblock \emph{arXiv preprint arXiv:2004.10746}, 2020.

\bibitem[Mowbray et~al.(2022)Mowbray, Zhang, and
  Chanona]{mowbray2022distributional}
M.~Mowbray, D.~Zhang, and E.~A. D.~R. Chanona.
\newblock Distributional reinforcement learning for scheduling of chemical
  production processes, 2022.

\bibitem[Mutti et~al.(2023)Mutti, De~Santi, De~Bartolomeis, and
  Restelli]{mutti2023convex}
M.~Mutti, R.~De~Santi, P.~De~Bartolomeis, and M.~Restelli.
\newblock Convex reinforcement learning in finite trials.
\newblock \emph{Journal of Machine Learning Research}, 24\penalty0
  (250):\penalty0 1--42, 2023.

\bibitem[Osband et~al.(2013)Osband, Russo, and Van~Roy]{osband2013more}
I.~Osband, D.~Russo, and B.~Van~Roy.
\newblock (more) efficient reinforcement learning via posterior sampling.
\newblock \emph{Advances in Neural Information Processing Systems}, 26, 2013.

\bibitem[Popova et~al.(2018)Popova, Isayev, and Tropsha]{popova2018deep}
M.~Popova, O.~Isayev, and A.~Tropsha.
\newblock Deep reinforcement learning for de novo drug design.
\newblock \emph{Science advances}, 4\penalty0 (7):\penalty0 eaap7885, 2018.

\bibitem[Rockafellar et~al.(2000)Rockafellar, Uryasev,
  et~al.]{rockafellar2000optimization}
R.~T. Rockafellar, S.~Uryasev, et~al.
\newblock Optimization of conditional value-at-risk.
\newblock \emph{Journal of risk}, 2:\penalty0 21--42, 2000.

\bibitem[Rowland et~al.(2018)Rowland, Bellemare, Dabney, Munos, and
  Teh]{rowland_analysis_2018}
M.~Rowland, M.~Bellemare, W.~Dabney, R.~Munos, and Y.~W. Teh.
\newblock An {Analysis} of {Categorical} {Distributional} {Reinforcement}
  {Learning}.
\newblock In \emph{Proceedings of the {Twenty}-{First} {International}
  {Conference} on {Artificial} {Intelligence} and {Statistics}}, pages 29--37.
  PMLR, Mar. 2018.
\newblock ISSN: 2640-3498.

\bibitem[Rowland et~al.(2019)Rowland, Dadashi, Kumar, Munos, Bellemare, and
  Dabney]{rowland_statistics_2019}
M.~Rowland, R.~Dadashi, S.~Kumar, R.~Munos, M.~G. Bellemare, and W.~Dabney.
\newblock Statistics and {Samples} in {Distributional} {Reinforcement}
  {Learning}.
\newblock In \emph{Proceedings of the 36th {International} {Conference} on
  {Machine} {Learning}}, pages 5528--5536. PMLR, May 2019.
\newblock ISSN: 2640-3498.

\bibitem[Rowland et~al.(2023)Rowland, Tang, Lyle, Munos, Bellemare, and
  Dabney]{rowland_statistical_2023}
M.~Rowland, Y.~Tang, C.~Lyle, R.~Munos, M.~G. Bellemare, and W.~Dabney.
\newblock The {Statistical} {Benefits} of {Quantile} {Temporal}-{Difference}
  {Learning} for {Value} {Estimation}.
\newblock 2023.

\bibitem[Shen et~al.(2014)Shen, Tobia, Sommer, and Obermayer]{6855488}
Y.~Shen, M.~J. Tobia, T.~Sommer, and K.~Obermayer.
\newblock Risk-sensitive reinforcement learning.
\newblock \emph{Neural Computation}, 26\penalty0 (7):\penalty0 1298--1328,
  2014.
\newblock \doi{10.1162/NECO_a_00600}.

\bibitem[Sobel(1982)]{sobel_variance_1982}
M.~J. Sobel.
\newblock The variance of discounted {Markov} decision processes.
\newblock \emph{Journal of Applied Probability}, 19\penalty0 (4):\penalty0
  794--802, Dec. 1982.
\newblock ISSN 0021-9002, 1475-6072.
\newblock \doi{10.2307/3213832}.

\bibitem[Sutton and Barto(2018)]{sutton2018reinforcement}
R.~S. Sutton and A.~G. Barto.
\newblock \emph{Reinforcement learning: An introduction}.
\newblock MIT press, 2018.

\bibitem[Szepesvári(2010)]{szepesvari_algorithms_2010}
C.~Szepesvári.
\newblock Algorithms for {Reinforcement} {Learning}.
\newblock \emph{Synthesis Lectures on Artificial Intelligence and Machine
  Learning}, 4\penalty0 (1):\penalty0 1--103, Jan. 2010.
\newblock ISSN 1939-4608, 1939-4616.
\newblock \doi{10.2200/S00268ED1V01Y201005AIM009}.

\bibitem[Villani(2003)]{villani2003topics}
C.~Villani.
\newblock \emph{Topics in Optimal Transportation}.
\newblock Graduate studies in mathematics. American Mathematical Society, 2003.
\newblock ISBN 9780821833124.
\newblock URL \url{https://books.google.fr/books?id=idyFAwAAQBAJ}.

\bibitem[Volk et~al.(2023)Volk, Epps, Yonemoto, Masters, Castellano, Reyes, and
  Abolhasani]{volk2023alphaflow}
A.~A. Volk, R.~W. Epps, D.~T. Yonemoto, B.~S. Masters, F.~N. Castellano, K.~G.
  Reyes, and M.~Abolhasani.
\newblock Alphaflow: autonomous discovery and optimization of multi-step
  chemistry using a self-driven fluidic lab guided by reinforcement learning.
\newblock \emph{Nature Communications}, 14\penalty0 (1):\penalty0 1403, 2023.

\bibitem[von Neumann et~al.(1944)von Neumann, Morgenstern, and
  Rubinstein]{von_neumann_theory_1944}
J.~von Neumann, O.~Morgenstern, and A.~Rubinstein.
\newblock \emph{Theory of {Games} and {Economic} {Behavior} (60th {Anniversary}
  {Commemorative} {Edition})}.
\newblock Princeton University Press, 1944.
\newblock ISBN 978-0-691-13061-3.

\bibitem[Watkins and Dayan(1992)]{watkins1992q}
C.~J. Watkins and P.~Dayan.
\newblock Q-learning.
\newblock \emph{Machine learning}, 8:\penalty0 279--292, 1992.

\bibitem[Wurman et~al.(2022)Wurman, Barrett, Kawamoto, MacGlashan, Subramanian,
  Walsh, Capobianco, Devlic, Eckert, Fuchs, Gilpin, Khandelwal, Kompella, Lin,
  MacAlpine, Oller, Seno, Sherstan, Thomure, Aghabozorgi, Barrett, Douglas,
  Whitehead, Dürr, Stone, Spranger, and Kitano]{wurman_outracing_2022}
P.~R. Wurman, S.~Barrett, K.~Kawamoto, J.~MacGlashan, K.~Subramanian, T.~J.
  Walsh, R.~Capobianco, A.~Devlic, F.~Eckert, F.~Fuchs, L.~Gilpin,
  P.~Khandelwal, V.~Kompella, H.~Lin, P.~MacAlpine, D.~Oller, T.~Seno,
  C.~Sherstan, M.~D. Thomure, H.~Aghabozorgi, L.~Barrett, R.~Douglas,
  D.~Whitehead, P.~Dürr, P.~Stone, M.~Spranger, and H.~Kitano.
\newblock Outracing champion {Gran} {Turismo} drivers with deep reinforcement
  learning.
\newblock \emph{Nature}, 602\penalty0 (7896):\penalty0 223--228, Feb. 2022.
\newblock ISSN 1476-4687.
\newblock \doi{10.1038/s41586-021-04357-7}.
\newblock Number: 7896 Publisher: Nature Publishing Group.

\end{thebibliography}

\newpage

\newpage
\appendix

\section{Additional remarks}
The Wasserstein metric is defined as $\wam_1(\mes_1,\mes_2) = \int_0^1 \left|F_{\mes_1}^{-1}(u) - F_{\mes_2}^{-1}(u)\right|\ \dd u$ and the Cramer metric as $\ell_2(\mes_1,\mes_2) = \left(\int_{-\infty}^{+\infty} \left|F_{\mes_1}(u) - F_{\mes_2}(u)\right|^2\ \dd u\right)^{\frac{1}{2}}$. 
For both metrics, we define their supremum $\overline{\ell_2}(\dist{1}{}{}, \dist{2}{}{}) = \sup_{(x,a) \in \XX \times \AAA} \ell_2(\dist{1}{}{(x,a)}, \dist{2}{}{(x,a)})$ and $\overline{\wam}_1(\dist{1}{}{},\dist{2}{}{}) = \sup_{(x,a) \in \XX \times \AAA} \wam_1(\dist{1}{}{(x,a)}, \dist{2}{}{(x,a)})$.

\subsection{Remarks on the recursive definition of the Q-value distribution}

The notation of the Q-value distribution $\qdist$ is often deceivingly complex compared to the actual object it means to represent. While the 'usual' expected Q-function $Q(x,a)$ is simply understood as the expected return of a policy at state-action pair $(x,a)$, DistRL requires us to keep a notation for the complete distribution of the return. In other words, the Q-value distribution $\dist{\pi, h}{(x,a)}$ should be understood as the distribution of the \emph{random variable} $Z= R+Z(S')$, which is the convolution of the individual distributions of these two independent random variable. It can also be written:
\begin{equation}
    \forall x,a,h,\quad \dist{h}{\pi}{(x,a)} = 
    \sum_{x', a'}\int  p_h(x,a,x')\pi_{h+1}^{x'}(a')\dist{h+1}{\pi}{(x',a')}(\cdot - r)\rew^{(x,a)}_h(dr)\;.\label{eq:bell_dist_upd_full}
\end{equation}

\subsection{Linear and Exponential Utilities satisfy the properties of Lemma~\ref{lem:bellman_opt_prop}}
\label{ap:lin_exp_utilities_properties}

\paragraph{Independence Property} Any utility $U_f$ verifies the independence property. Let $\mes_1, \mes_2, \mes_3 \in \PPPP(\RR),\ \lambda \in [0,1]$. Assume $U_f(\mes_1) \geq U_f(\mes_2)$. Then,

\begin{align*}
    U_f(\lambda\mes_1 + (1-\lambda)\mes_3) 
        &= \int f \dd(\lambda \mes_1 + (1-\lambda)\mes_3)\\
        &= \lambda \underbrace{\int f \dd\mes_1}_{U_f(\nu_1)} + (1-\lambda) \int f \dd\mes_3 \\
        &\geq \lambda \underbrace{\int f \dd\mes_2}_{U_f(\nu_2)} + (1-\lambda) \int f \dd\mes_3 \\
        &= \int f \dd(\lambda \mes_2 + (1-\lambda)\mes_3)\\
        &= U_f(\lambda\mes_2 + (1-\lambda)\mes_3) 
\end{align*}
In particular, the mean and the exponential utility do.

\paragraph{Translation Property} This property comes from the linearity of the mean and the multiplicative morphism of the exponential. Let $\mes_1, \mes_2 \in \PPPP(\RR),\ c \in \RR$. Assume that $U_{\exp{}}(\mes_1) \geq U_{\exp{}}(\mes_2)$ and $U_{\text{mean}}(\mes_1) \geq U_{\text{mean}}(\mes_2)$. Then,

\begin{align*}
    U_{\exp}(\tau_c\mes_1) 
        &= \int \exp(r) \dd\tau_c\mes_1(r)\\
        &= \int \exp(r + c) \dd\mes_1(r)\\
        &= \exp(c)\int \exp(r) \dd\mes_1(r)\\ 
        &= \exp(c)U_{\exp{}}(\mes_1)\\ 
        &\geq \exp(c)U_{\exp{}}(\mes_2)\\ 
        &= \dots \\
        &= U_{\exp}(\tau_c\mes_2)
\end{align*}

\begin{align*}
    U_{\text{mean}}(\tau_c\mes_1) 
        &= \int \lambda \tau +b +c \dd \tau \\
        &= c + U_{\text{mean}}(\mes_1)\\
        &\geq c + U_{\text{mean}}(\mes_2)\\ 
        &= \dots \\
        &= U_{\text{mean}}(\tau_c\mes_2)
\end{align*}

\subsection{Policy Evaluation for linear combinations of moments}
\label{ap:dp_moments}
\citet{rowland_statistics_2019} prove a necessary condition on Bellman-closed utilities, namely that they should be a family of the form $\{x \mapsto x^\ell \exp(\lambda x)| 0\leq \ell \leq L \}$ for the undiscounted, finite-horizon setting. 
In the discounted case, the necessary condition is only valid for $\lambda=0$, that is, without the exponential. They also prove that moments (without the exponential) also verify the sufficient condition such that in that setting they are the only Bellman-closed families of utilities. 

In the undiscounted setting, to the best of our knowledge, a similar result has not yet been proved. We provide here the sufficient condition for families of the form $\{x \mapsto x^\ell \exp(\lambda x)| 0\leq \ell \leq L \}$. We show that they are Bellman-closed and that this implies that they can be computed by DP.

Let’s consider the family $\sfunc_k(\mes) = \int r^k\exp(\lambda r) \dd\mes(r)$ for $k\in [n]$ and some fixed $\lambda \in \RR$.

\footnotesize
\begin{align*}
    &\sfunc_n(\dist{h}{\pi}{(x,a)}) \\
        &= \EE{Z_h^\pi(x,a)^n\exp(\lambda Z_h^\pi(x,a))} \\
        &= \EE{(R_h(x,a) + Z_{h+1}^\pi(X',A'))^n\exp(\lambda (R_h(x,a) + Z_{h+1}^\pi(X',A')) }\\
        &= \EEE{x', a'}{\EEE{R_h, Z_{h+1}}{(R_h(x,a) + Z_{h+1}^\pi(x',a'))^n\exp(\lambda (R_h(x,a) + Z_{h+1}^\pi(x',a')) | X' = x', A' = a' }}\\
        &= \sum_{x', a'} p_h(x,a,x') \pi_h^{x'}(a') \EEE{R_h, Z_{h+1}}{(R_h(x,a) + Z_{h+1}^\pi(x',a'))^n\exp(\lambda (R_h(x,a) + Z_{h+1}^\pi(x',a'))}\\
        &= \sum_{x', a'} p_h(x,a,x') \pi_h^{x'}(a') \EEE{R_h, Z_{h+1}}{\sum_{k=0}^n \binom{n}{k} R_h(x,a)^{n-k}\exp(\lambda R_{h}(x,a)) Z_{h+1}^\pi(x',a')^k\exp(\lambda Z_{h+1}^\pi(x',a')}\\
        &= \sum_{x', a'} p_h(x,a,x') \pi_h^{x'}(a') \sum_{k=0}^n \binom{n}{k} \EEE{R_h}{R_h(x,a)^{n-k}\exp(\lambda R_{h}(x,a))} \EEE{Z_{h+1}}{Z_{h+1}^\pi(x',a')^k\exp(\lambda Z_{h+1}^\pi(x',a')}\\
        &= \sum_{x', a'} p_h(x,a,x') \pi_h^{x'}(a') \sum_{k=0}^n \binom{n}{k} \EEE{R_h}{R_h(x,a)^{n-k}\exp(\lambda R_{h}(x,a))}\sfunc_k(\dist{h+1}{\pi}{(x',a')})
\end{align*}
\normalsize

This first proves that this family of statistical functional is Bellman closed: they can be expressed as a linear combination of the others. Moreover, on the right-hand side, the expression only depends on the distributions and functionals at the current step $h$ and at the next step $h+1$. Thus, it provides a natural way to evaluate these functionals by DP. 

\section{Categorical Projection: an alternative parametrization}
\label{ap:projection}
The categorical projection was proposed and studied in \citet{bellemare_distributional_2017,rowland_analysis_2018,bdr2023}. For a bounded return distribution, it spreads a fixed number $N$ of Diracs evenly over the support and used weight parameters to represent the true distribution. The parameter $N$ is often referred to as the \emph{resolution} of the projection. 
More precisely, on a support $[V_\text{min},V_\text{max}]$, we write $\Delta = \frac{V_\text{max} - V_\text{min}}{N-1}$ the step between atoms $z_i = V_\text{min} + i\Delta, \quad i \in \intervalint{0,N-1}$. We define the projection of a given Dirac distribution $\delta_y$ on the parametric space $\PPPP_C(\RR) = \{ \sum_i p_i \delta_{z_i} |\ 0 \leq p_i \leq 1,\ \sum_i p_i = 1 \}$ by
\begin{align}
    \Pi_C(\delta_y) = 
    \begin{cases}
        \delta_{z_0} & y \leq z_0\\
        \frac{z_{i+1}-y}{z_{i+1}-z_{i}}\delta_{z_i} + \frac{y - z_i}{z_{i+1}-z_{i}}\delta_{i+1} & z_i < y < z_{i+1}\\
        \delta_{z_{N-1}} & y \geq z_{N-1}
    \end{cases}
\end{align}
This definition can naturally be extended to any bounded distribution $\mes$, and by extension, $\Pi_C \qdist = (\Pi_C \dist{}{}{(s,a)})_{(s,a) \in \SSS \times \AAA}$.
This linear operator minimizes the Cram\'er distance of the parametrization to the parametric space \citep{rowland_analysis_2018}.

This projection verifies this approximation bound, analog to the quantile projection, 
\begin{equation}
    \sup_{(x,a) \in \SSS \times \AAA} \, \ell_2 (\Pi_C\dist{}{}{(x,a)}, \dist{}{}{(x,a)}) \leq \frac{\Delta_\qdist}{N} \;.
\end{equation}
Using the property that $\wam_1(\mes_1,\mes_2) \leq \sqrt \Delta_\qdist \ell_2(\mes_1, \mes_2)$, the results with the quantile projection can be adapted for the categorical projection, adding $\sqrt \Delta_\qdist$ factors to the bounds.

This parametrization has the nice property of preserving the mean of the distribution. Yet, even for risk-neutral RL, Quantile-based DistRL algorithm seem to work better and display better properties\citep{dabney_distributional_2018,dabney_implicit_2018}.

\section{Proofs for Policy Evaluation with parameterized distributions}
\label{ap:proof_policy_eval}
\subsection{Proof of Proposition~\ref{prop:error_dist}}
We recall the statement of Proposition~\ref{prop:error_dist}:
    Let $\pi$ be a policy and $\dist{}{\pi}{}$ the associated Q-value distributions. Assume the return is bounded on a interval of length $\Delta_\qdist \leq H\Delta_R$, where $\Delta_R$ is the support size of the reward distribution.   Let $\hdist{}{\pi}{}$ be the Q-value distributions obtained by dynamic programming (Algorithm~\ref{alg:eval_rec}) using the quantile projection $\Pi_Q$ 
    with resolution $N$. Then,
    $$ \sup_{(x,a,h) \in (\SSS,\AAA,[H])}\wam_1(\hdist{h}{\pi}{(x,a)},\dist{h}{\pi}{}{(x,a)}) \leq H\frac{\Delta_\qdist}{2N} \leq H^2\frac{\Delta_R}{2N}\;.$$

    To avoid clutter of notation, we denote $\overline \wam_1(\hat \qdist, \qdist) := \sup_{(x,a)\in \SSS \times \AAA} \wam_1(\hdist{}{}{(x,a)}, \dist{}{}{(x,a)})$.
\begin{proof}
    First recall that for any Q-value distribution $(\dist{h}{}{})_{h\in [H]}$, with the return bounded on an interval of length $\Delta_\qdist \leq H\Delta_R$, and $\Pi$ one of the projection operator of interest with resolution $n$, we have the following bound on the projection estimation error due to \citet{rowland_statistics_2019} (Eq~\eqref{eq:proj_approx}):

    \begin{equation}
        \overline \wam_1 (\Pi\qdist, \qdist) \leq \frac{\Delta_\qdist}{2N} \;.\label{proj_approx2}
    \end{equation}

   At a fixed step $h\in[H]$, we have the following inequality:

    \begin{align}
     \overline \wam_1(\hdist{h}{\pi}{}, \dist{h}{\pi}{})  \nonumber
                        &= \overline \wam_1(\Pi\TT^\pi_h\hdist{h+1}{\pi}{}, \TT^\pi_h\dist{h+1}{\pi}{}) \nonumber \\
                        &\leq \overline \wam_1(\Pi\TT^\pi_h\hdist{h+1}{\pi}{}, \TT^\pi_h\hdist{h+1}{\pi}{})
                        + \overline \wam_1(\TT^\pi_h\hdist{h+1}{\pi}{}, \TT^\pi_h\dist{h+1}{\pi}{})\label{ine_tri}\\
                        &\leq H\frac{\Delta_R}{2N} 
                        + \overline \wam_1(\hdist{h+1}{\pi}{}, \dist{h+1}{\pi}{}) \;.\label{reduc}  
    \end{align}
    Where \eqref{ine_tri} is due to the triangular inequality with $\TT^\pi_h\hdist{h+1}{\pi}{}$ as the middle term. In \eqref{reduc}, the first term comes from applying \eqref{proj_approx2} to the first term of the previous line. The second term is a consequence of the non-expansive property of the Bellman operator \citep{bellemare_distributional_2017}:
    \begin{equation*}
        \overline \wam_1(\TT\dist{1}{}{}, \TT\dist{2}{}{}) \leq \overline \wam_1(\dist{1}{}{},\dist{2}{}{})\ .
    \end{equation*}
  
    Using it recursively starting from $h=1$, and using the fact that $\hdist{H}{\pi}{} = \dist{H}{\pi}{}$ we get:

    \begin{equation*}
       \overline \wam_1(\hdist{1}{\pi}{}, \dist{1}{\pi}{}) 
            \leq H \frac{\Delta_R}{2N}  + \overline \wam_1(\hdist{2}{\pi}{}, \dist{2}{\pi}{})
            \leq 2H \frac{\Delta_R}{2N}  + \overline \wam_1(\hdist{3}{\pi}{}, \dist{3}{\pi}{})
            \leq \dots
            \leq H^2\frac{\Delta_R}{2N}\;.
    \end{equation*}
\end{proof}

\subsection{Proof of Lemma~\ref{lem:error_stat_func}}

We recall the statement of Lemma~\ref{lem:error_stat_func}: Let $\sfunc$ be either a utility or a distorted mean and let $L$ be the Lipschitz coefficient of its characteristic function. Let $\mes_1, \mes_2$ be return distributions. Then:
    $$|\sfunc(\mes_1) - \sfunc(\mes_2)| \leq L\wam_1(\mes_1, \mes_2)\;.$$

\begin{proof}
    We prove the property for each family of utilities separately:

    \textbf{Case 1: } $\sfunc$ is a utility. There exists $f$ such that $\sfunc(\mes) = \int f \dd \mes$. Let $L_f$ be its Lipsichtz constant. The Kantorovitch-Rubenstein duality \citep{villani2003topics} states that:
    \begin{equation}
        \wam_1(\mes_1,\mes_2) = \frac{1}{L_f} \sup_{||g||_L \leq L_f} \left( \int g\ \dd \mes_1 - \int g\ \dd \mes_2 \right)\, ,
    \end{equation}
    where $||\cdot||_L$ is the Lipschitz norm. We then immediatly get:
    \begin{equation}
        L_f\wam_1(\mes_1,\mes_2) \geq \Big|\int f\ \dd \mes_1 - \int f\ \dd \mes_2 \Big| = |\sfunc(\mes_1) - \sfunc(\mes_2)|\;.
    \end{equation}

    \textbf{Case 2: }$\sfunc$ is a distorted mean. There exists $g$ such that $\sfunc(\mes) = \int_0^1 g'(\tau)F^{-1}_\mes(\tau)\dd \tau$. Let $L_g$ be its Lipschitz coefficent. Thus:
    \begin{align*}
        |\sfunc(\mes_1) - \sfunc(\mes_2)| 
            &= \Big| \int_0^1 g'(\tau)\left(F^{-1}_{\mes_1} - F^{-1}_{\mes_2}(\tau)\right)\dd \tau \Big|\\
            &\leq ||g'||_\infty  \int_0^1 \Big| F^{-1}_{\mes_1}(\tau) - F^{-1}_{\mes_2}(\tau)\Big| \dd \tau \\
            &\leq L_g \wam_1(\mes_1,\mes_2)\;.
    \end{align*}
\end{proof}

\section{About the tightness of Theorem~\ref{th:eval_stat_func_error}}

The upper bound provided by Theorem~\ref{th:eval_stat_func_error} is mainly based on Proposition~\ref{th:eval_stat_func_error}. The latter is obtained by summing, for every step, the Projection Bound by \citet{rowland_statistics_2019} (Eq.~\ref{eq:proj_approx}). Thus, achieving the bound would requires first to find a problem instance for which, at every step, the projection bound is tight. Then it would require to verify that the total error is the sum of those projection errors.

The experiment in Figure~\ref{fig:fig_eval} already shows that summing the total error is very close to the sum of the projection errors. However, in that example, the projection error bound is not reached after the first step. In the following, we exhibit an MDP for which the projection bound is tight at every timestep.

First, let us consider a family of distributions for which the projection error is tight:
\begin{proposition}
    Let $N \in \NN,\ \Delta\in\RR^+$. Consider $z_i = \frac{i\Delta}{N}$.
    The distribution 
    \[ \mes_{N,\Delta} = \frac{1}{2N}\sum_{i=0}^{N-1} (\delta_{z_i} + \delta_{z_{i+1}}) \]
    has a support of length $\Delta$ and verifies $\wam_1 (\mes_{N,\Delta}, \Pi_Q \mes_{N,\Delta}) = \frac{\Delta}{2N}$\;.
\end{proposition}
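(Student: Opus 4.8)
The plan is to directly compute both the support length and the Wasserstein distance $\wam_1(\mes_{N,\Delta}, \Pi_Q \mes_{N,\Delta})$ from the definition. First I would record the structure of $\mes_{N,\Delta}$: it places mass $\frac{1}{2N}$ at each of $z_0$ and $z_N$, and mass $\frac{1}{N}$ at each interior point $z_1, \dots, z_{N-1}$, since each such $z_i$ is hit once as the ``right endpoint'' $z_{i}$ of the pair indexed by $i-1$ and once as the ``left endpoint'' $z_i$ of the pair indexed by $i$. (Here $z_i = i\Delta/N$, so consecutive atoms are spaced $\Delta/N$ apart.) The support runs from $z_0 = 0$ to $z_N = \Delta$, so the support length is $\Delta$; that part is immediate.

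For the Wasserstein computation I would use the quantile representation $\wam_1(\mu,\nu) = \int_0^1 |F_\mu^{-1}(u) - F_\nu^{-1}(u)|\,du$. The key step is to identify $F_{\mes_{N,\Delta}}^{-1}$ and $F_{\Pi_Q\mes_{N,\Delta}}^{-1}$ explicitly as step functions of $u\in[0,1)$. For $\mes_{N,\Delta}$: the CDF jumps by $\frac{1}{2N}$ at $z_0$, then by $\frac1N$ at each $z_1,\dots,z_{N-1}$, then by $\frac{1}{2N}$ at $z_N$; hence on the block $u \in \big[\frac{2i-1}{2N}, \frac{2i+1}{2N}\big)$ the quantile function equals $z_i$ (with the two end half-blocks $[0,\frac{1}{2N})$ mapping to $z_0$ and $[\frac{2N-1}{2N},1)$ mapping to $z_N$). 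For the quantile projection $\Pi_Q\mes_{N,\Delta} = \frac1N\sum_{i=0}^{N-1}\delta_{\zeta_i}$, by definition $\zeta_i$ is the value with $F_{\mes_{N,\Delta}}(\zeta_i) = \frac{2i+1}{2N}$; reading off the CDF above, the level $\frac{2i+1}{2N}$ is attained exactly on the flat stretch $[z_i, z_{i+1})$, and I would take $\zeta_i$ to be (say) $z_i = \frac{i\Delta}{N}$ — any point in that interval is a valid quantile, which is precisely why the proposition says the projection need not be unique, and I should note that the computed distance is the same for any valid choice, or simply fix $\zeta_i = z_i$. Then $F_{\Pi_Q\mes_{N,\Delta}}^{-1}(u) = z_i$ for $u \in [\frac{i}{N}, \frac{i+1}{N})$.

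With both quantile functions in hand, the integral $\int_0^1 |F^{-1}_{\mes} - F^{-1}_{\Pi_Q\mes}|\,du$ becomes a finite sum over the $2N$ dyadic subintervals of width $\frac{1}{2N}$ on which both functions are constant. On each such subinterval the two quantile functions take values among the grid points $z_j$, and the difference is either $0$ or one grid step $\frac{\Delta}{N}$; a short bookkeeping check shows the difference is $\frac{\Delta}{N}$ on exactly half of these pieces (the ``misaligned halves'') and $0$ on the other half, giving $\int_0^1 |\cdot|\,du = \big(N \cdot \frac{1}{2N}\big)\cdot\frac{\Delta}{N}\cdot\frac12$... — rather, more carefully, summing $\frac{1}{2N}\cdot\frac{\Delta}{N}$ over the $N$ contributing pieces yields $\frac{\Delta}{2N}$. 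The main obstacle, such as it is, is purely the careful alignment of the two grids of breakpoints (the $\frac{2i+1}{2N}$ grid for $\mes_{N,\Delta}$ versus the $\frac{i}{N}$ grid for $\Pi_Q\mes_{N,\Delta}$) and handling the half-weight atoms at the two endpoints correctly; there is no conceptual difficulty, only the risk of an off-by-one error in the endpoint blocks. Finally, since Eq.~\eqref{eq:proj_approx} already guarantees $\wam_1(\mes_{N,\Delta},\Pi_Q\mes_{N,\Delta}) \le \frac{\Delta_{\mes_{N,\Delta}}}{2N} = \frac{\Delta}{2N}$, the explicit computation shows this upper bound is attained, proving tightness.
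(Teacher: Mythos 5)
Your proof is correct and follows essentially the same route as the paper's: an explicit description of the CDF and quantile function of $\mes_{N,\Delta}$, the choice $\zeta_i = z_i$ for the projection atoms, and a direct integration of $|F^{-1}_{\mes} - F^{-1}_{\Pi_Q\mes}|$ over the common refinement of the two breakpoint grids, with the mismatch of $\Delta/N$ occurring on exactly $N$ sub-intervals of length $1/(2N)$. Your added observation that the computed distance is independent of which valid quantile projection is chosen is a small refinement the paper does not state explicitly, but it does not change the argument.
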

\begin{proof}
   The cumulative distribution function (CDF)  the distribution $\mes_{N,\Delta}$ is 
        \[ F_{\mes_{N,\Delta}}(x) = \left\{\begin{array}{ll}
            0 & x < 0 = z_0  \;,\\
            \frac{2i+1}{2N}=\tau_i & z_i \leq x < z_{i+1}  \;,\\
            1 & x \geq 1 = z_N\;.
        \end{array} \right. \]
     We write $\tau_i = \frac{2i+1}{2N}$, so that $\forall i\in[|0,N-1|], F_{\mes_{N,\Delta}}(z_i) = \tau_i$. 
   We now explicit the projection of $\mes_{N,\Delta}$  and the Wasserstein distance relative to it. A possible Quantile projection is $\Pi_Q \mes_{N,\Delta} = \frac{1}{N} \sum_{i=0}^{N-1} \delta_{z_i}$, and 
    \begin{align*}
        \wam_1 (\mes_{N,\Delta}, \Pi_Q \mes_{N,\Delta}) &= \int_0^1 |F_{\mes}^{-1}(w) -  F_{\Pi_Q \mes}^{-1}(w)| \dd w\\  \\
        &= \sum_{i=0}^{N-1} \int_{\frac{i}{N}}^{\frac{i+1}{N}} |F_{\mes}^{-1}(w) -  \underbrace{F_{\Pi_Q \mes}^{-1}(w)}_{z_i}|  \dd w\\
        &= \sum_{i=0}^{N-1} \int_{\frac{i}{N}}^{\tau_i} |\underbrace{F_{\mes}^{-1}(w)}_{z_i} - z_i|\dd w + \int_{\tau_i}^{\frac{i+1}{N}} |\underbrace{F_{\mes}^{-1}(w)}_{z_{i+1}} - z_i|\dd w\\
        &= \sum_{i=0}^{N-1} \frac{1}{2N} \underbrace{(z_{i+1} - z_i)}_{\frac{\Delta}{N}}\\
        &= \sum_{i=0}^{N-1} \frac{\Delta}{2N^2}\\
        &= \frac{\Delta}{2N}
    \end{align*}

\end{proof}

The value distribution of the following time steps is obtained by applying two operators: the Bellman operator and the projection operator. Here will consider a MDP with only one state. We need to find such operators so that $\Pi_Q\mathcal T \mes_{N_1\Delta_1} = \mes_{N_2\Delta_2}$, where the Bellman operator simply consists in the added reward distribution.

\begin{proposition}
    \label{prop:tight_proj_op}
    Let $N \in \NN,\ \Delta\in\RR^+$. Consider $\varrho = \frac{1}{2}(\delta_0 + \delta_{\frac{\Delta}{N-1}})$. Then there exists a Quantile projection operator $\Pi_Q$ such that
    \[ \varrho \ast\Pi_Q( \mes_{N,\Delta}) = \mes_{N,(\Delta + \frac{\Delta}{N-1})} \]
\end{proposition}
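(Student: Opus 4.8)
The plan is to exploit the fact that the quantile projection of $\mes_{N,\Delta}$ is \emph{not unique}: besides the canonical choice $\frac1N\sum_i\delta_{z_i}$ (with $z_i=\frac{i\Delta}{N}$) used in the preceding proposition, the $\wam_1$-argmin defining $\Pi_Q$ admits many minimizers, and one of them can be tuned so that convolving with the reward $\varrho$ reproduces $\mes_{N,\Delta'}$ with $\Delta'=\Delta+\frac{\Delta}{N-1}=\frac{N\Delta}{N-1}$ exactly. Throughout I would write $d=\frac{\Delta}{N-1}$ (so $\varrho=\tfrac12(\delta_0+\delta_d)$) and assume $N\ge2$.

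First I would exhibit the candidate: $\Pi_Q(\mes_{N,\Delta}):=\frac1N\sum_{i=0}^{N-1}\delta_{w_i}$ with $w_i:=id=\frac{i\Delta}{N-1}$, i.e.\ an equal-weight grid of step $d$ on $[0,\Delta]$. The first real step is to check this is a \emph{legitimate} quantile projection. For that I would recall that the quantile function of $\mes_{N,\Delta}$ is the two-valued step function equal to $\frac{i\Delta}{N}$ on $(\frac iN,\frac{2i+1}{2N}]$ and to $\frac{(i+1)\Delta}{N}$ on $(\frac{2i+1}{2N},\frac{i+1}{N}]$; hence on the $i$-th block the map $w\mapsto\int_{i/N}^{(i+1)/N}\big|F^{-1}_{\mes_{N,\Delta}}(u)-w\big|\,\dd u$ is minimized by \emph{any} $w\in[\frac{i\Delta}{N},\frac{(i+1)\Delta}{N}]$, with minimal value $\frac{\Delta}{2N^2}$. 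The only thing left to verify is the elementary inequality $\frac iN\le\frac i{N-1}\le\frac{i+1}{N}$ for $0\le i\le N-1$ (the right-hand one reduces to $N-1-i\ge0$), which places every $w_i$ inside the admissible interval for block $i$; summing the per-block costs then gives $\wam_1(\mes_{N,\Delta},\Pi_Q\mes_{N,\Delta})=\frac{\Delta}{2N}$, matching the optimal value from the preceding proposition, so $\Pi_Q(\mes_{N,\Delta})$ is indeed a valid quantile projection.

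The last step is a one-line convolution: since $w_i=id$,
\[ \varrho\ast\Pi_Q(\mes_{N,\Delta})=\frac1{2N}\sum_{i=0}^{N-1}\big(\delta_{w_i}+\delta_{w_i+d}\big)=\frac1{2N}\sum_{i=0}^{N-1}\big(\delta_{id}+\delta_{(i+1)d}\big), \]
and because $\mes_{N,\Delta'}$ has atoms at $\frac{i\Delta'}{N}=\frac{i\Delta}{N-1}=id$, the right-hand side is exactly $\mes_{N,\Delta'}$. A quick comparison of the weights of the atoms at $0$, at $Nd$, and at each interior point $jd$ ($1\le j\le N-1$) confirms that the two measures coincide.

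I expect the main obstacle to be conceptual rather than computational: one has to notice that the naive characterization ``$F_{\mes_{N,\Delta}}(w_i)=\frac{2i+1}{2N}$'' is too rigid here — it would forbid $w_{N-1}=\Delta$, which sits at the \emph{right endpoint} of its block — and instead argue from the $\wam_1$-argmin definition, equivalently from the observation that on each block the quantile function takes only two values so any point between them is an admissible ``block median''. Once this point is accepted, the remainder is just bookkeeping on the grid $\{id\}_i$.
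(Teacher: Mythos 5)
Your proof is correct and follows essentially the same route as the paper: you pick the same candidate $\Pi_Q(\mes_{N,\Delta})=\frac1N\sum_{i=0}^{N-1}\delta_{i\Delta/(N-1)}$ and conclude with the same one-line convolution identity $\frac{i\Delta}{N-1}=\frac iN\bigl(\Delta+\frac{\Delta}{N-1}\bigr)$. The only place you diverge is in certifying that this candidate is a legitimate quantile projection: the paper checks the defining equation $F_{\mes_{N,\Delta}}(w_i)=\frac{2i+1}{2N}$ by locating each $w_i$ in $[z_i,z_{i+1})$, whereas you argue from the $\wam_1$-argmin characterization, minimizing block by block. Your version is actually the more careful one, since the CDF equation fails at the last atom ($w_{N-1}=\Delta$ is the right endpoint of its block, where $F_{\mes_{N,\Delta}}(\Delta)=1\neq\frac{2N-1}{2N}$), a corner case the paper's check glosses over but that the argmin formulation absorbs because any $w\in[z_i,z_{i+1}]$ attains the per-block minimum $\frac{\Delta}{2N^2}$. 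So the two arguments buy the same conclusion, but yours makes explicit why non-uniqueness of the projection is what licenses the choice, which is precisely the point the corollary then exploits.
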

\begin{proof}
    We consider $\Pi_Q( \mes_{N,\Delta}) = \frac{1}{N} \sum_{i=0}^{N-1} \delta_{\frac{i\Delta}{N-1}}$. Since $\forall i, \frac{i\Delta}{N-1} \leq z_i < \frac{(i+1)\Delta}{N-1}$, we have $F(z_i) = \tau_i$, verifying that it is indeed a valid Quantile Projection. 

    Hence, 
    \begin{align*}
        \varrho \ast\Pi_Q( \mes_{N,\Delta}) &= \frac{1}{2}(\delta_0 + \delta_{\frac{\Delta}{N-1}}) \ast \left(\frac{1}{N} \sum_{i=0}^{N-1} \delta_{\frac{i\Delta}{N-1}}\right)\\
        &= \frac{1}{N}\sum_{i=0}^{N-1} \left(\frac{1}{2}(\delta_0 + \delta_{\frac{\Delta}{N-1}}) \ast \delta_{\frac{i\Delta}{N-1}}\right)\\
        &= \frac{1}{2N}\sum_{i=0}^{N-1}\left( \delta_{\frac{i\Delta}{N-1}} + \delta_{\frac{(i+1)\Delta}{N-1}}\right)\\
        &= \mes_{N,(\Delta + \frac{\Delta}{N-1})}
    \end{align*}
    The last egality comes down to noticing that $\frac{i\Delta}{N-1} = \frac{i}{N}(\Delta + \frac{\Delta}{N-1})$.
\end{proof}

Here, we take advantage of the fact that the Quantile Projection is not unique (see discussion in section \ref{sec:back_distrib}). By choosing the adapted projection, it is then possible to obtain one of those distribution at every timestep.

\begin{corollary}
    \label{cor:mdp_proj_tight}
    Let $N \in \NN,\ \Delta_0\in\RR^+$. Consider the sequence $\Delta_{h+1} = \Delta_h(1 + \frac{1}{N-1})$ and $\varrho_h = \frac{1}{2}(\delta_0 + \delta_{\frac{\Delta_h}{N-1}})$. Consider $\Pi_Q$ as in Prop.~\ref{prop:tight_proj_op}.

    Consider the MDP with only one state $x$ and action $a$, reward distribution $\varrho_h$, horizon $H$. Consider $\hdist{h}{}{}$ the value distributions obtained throught dynamic programming with quantile projection. Then, at any timestep $h$, the error induced by the projection operator matches the bound in Eq.~\ref{eq:proj_approx}:

    \[ \wam_1 (\Pi_Q\dist{h}{}{}, \dist{h}{}{}) = \frac{\Delta_h}{2N} \]
\end{corollary}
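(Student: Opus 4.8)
\textbf{Proof strategy for Corollary~\ref{cor:mdp_proj_tight}.}
The plan is to argue by induction on $h$, running downward from the terminal step, and at each stage use the two preceding propositions as building blocks. The key observation is that in this single-state, single-action MDP the distributional Bellman operator $\TT_h$ acts simply by convolving with the reward distribution $\varrho_h$, and the projected dynamic programming recursion produces exactly the family of distributions $\mes_{N,\Delta_h}$ studied above, provided one consistently picks the (non-unique) quantile projection $\Pi_Q$ from Proposition~\ref{prop:tight_proj_op}.

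\textbf{Key steps.} First I would set up the induction hypothesis: I claim that the \emph{pre-projection} value distribution at step $h$ is $\dist{h}{}{} = \mes_{N,\Delta_h}$, where $\Delta_h$ follows the prescribed recursion $\Delta_{h+1}=\Delta_h(1+\tfrac{1}{N-1})$. For the base case, one checks that after the first Bellman step from the terminal $\delta_0$ one obtains $\mes_{N,\Delta_H}$ (up to relabelling the index convention), or more simply one takes $\Delta_0$ to be the initial support length and verifies the first nontrivial step directly. For the inductive step, suppose $\dist{h+1}{}{} = \mes_{N,\Delta_{h+1}}$. Applying the chosen quantile projection gives $\Pi_Q\dist{h+1}{}{} = \frac{1}{N}\sum_{i=0}^{N-1}\delta_{i\Delta_{h+1}/(N-1)}$ as in Proposition~\ref{prop:tight_proj_op}; then convolving with $\varrho_h = \frac{1}{2}(\delta_0 + \delta_{\Delta_h/(N-1)})$ — here one must double-check that the reward increment is compatible, i.e. that $\varrho_h$ has the right atom spacing relative to $\mes_{N,\Delta_{h+1}}$ so that Proposition~\ref{prop:tight_proj_op} applies verbatim — yields $\dist{h}{}{} = \mes_{N,\Delta_h'}$ for the appropriate next support length, closing the induction. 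Finally, once the induction establishes $\dist{h}{}{} = \mes_{N,\Delta_h}$, the first proposition of this section gives directly $\wam_1(\Pi_Q\dist{h}{}{}, \dist{h}{}{}) = \frac{\Delta_h}{2N}$, which is exactly the claimed equality matching the bound of Eq.~\eqref{eq:proj_approx}.

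\textbf{Main obstacle.} The routine parts are the convolution and CDF computations, which are already packaged in the two propositions. The delicate point is \emph{bookkeeping the indices and the support scaling consistently}: the reward at step $h$ must have atom spacing $\Delta_h/(N-1)$ while the incoming distribution has support length $\Delta_{h+1}=\Delta_h(1+\tfrac1{N-1})$, and one needs these to line up so that the convolution lands back in the family $\mes_{N,\cdot}$ with the correct parameter. A secondary subtlety is that the quantile projection is not unique, so the argument only shows the bound is \emph{attainable} — one must be explicit that the recursion uses, at every step, the particular projection exhibited in Proposition~\ref{prop:tight_proj_op} rather than an arbitrary valid one. Once these alignments are verified, the result follows immediately by chaining the two propositions through the induction.
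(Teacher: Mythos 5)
Your proposal is correct and follows exactly the route the paper intends: the corollary is obtained by inductively chaining Proposition~\ref{prop:tight_proj_op} (which keeps the iterates inside the family $\mes_{N,\cdot}$ with the support recursion $\Delta_{h+1}=\Delta_h(1+\tfrac{1}{N-1})$, provided the specific non-unique projection exhibited there is used at every step) with the first proposition of that section (which evaluates the projection error of $\mes_{N,\Delta}$ as $\tfrac{\Delta}{2N}$). The two subtleties you flag — aligning the reward spacing $\Delta_h/(N-1)$ with the incoming support, and initializing the recursion inside the family $\mes_{N,\Delta_0}$ rather than at the terminal $\delta_0$ — are precisely the points one must check, and both work out.
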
 

\begin{figure}
    \centering
    \includegraphics*{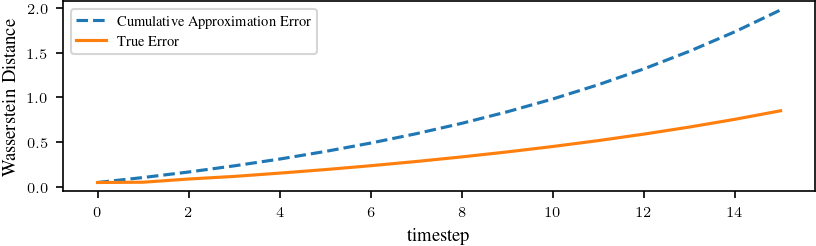}
    \caption{Evaluation of the Wasserstein Distance between the true value distribution and the approximated one, in the MDP described in Corollary~\ref{cor:mdp_proj_tight}}.
    \label{fig:tight_exp}
\end{figure}

While the projection error is maximal in such instance, it was verified experimentally that the bound in Prop.~\ref{prop:error_dist} was still not tight. This comes from the fact that the Bellman operator is not a non-contraction is this case, and that the triangular inequality used to sum the projection error is not tight either.

We hence found that every inequality used in proving Theorem~\ref{th:eval_stat_func_error} can be tight, but there does not seem to exist an instance for which all the inequalities are tight at the same time, meaning that this bound would be never reached exactly.

\section{Proof of the main results}
\label{ap:proof_main_thm}

The proof of the result is divided in parts. First we show that Bellman optimizable functionals verify the two properties of Lemma \ref{lem:bellman_opt_prop} (Independence and Translation). Then, using those properties, we prove that Bellman optimizable functionals can only be exponential utilities (Theorem \ref{th:affine_exp_utilities}). Using the known fact that exponential utilities are bellman optimizable, we obtain the full characterization.

\subsection{Proof of Lemma~\ref{lem:bellman_opt_prop}}

\begin{proof}

\textbf{To prove that each property is necessary}, we use a proof by contradiction, and exhibit MDPs where the algorithm is not optimal when the property is not verified.

    \begin{figure}[h]
        \centering
        \includegraphics[height = 0.25\textheight]{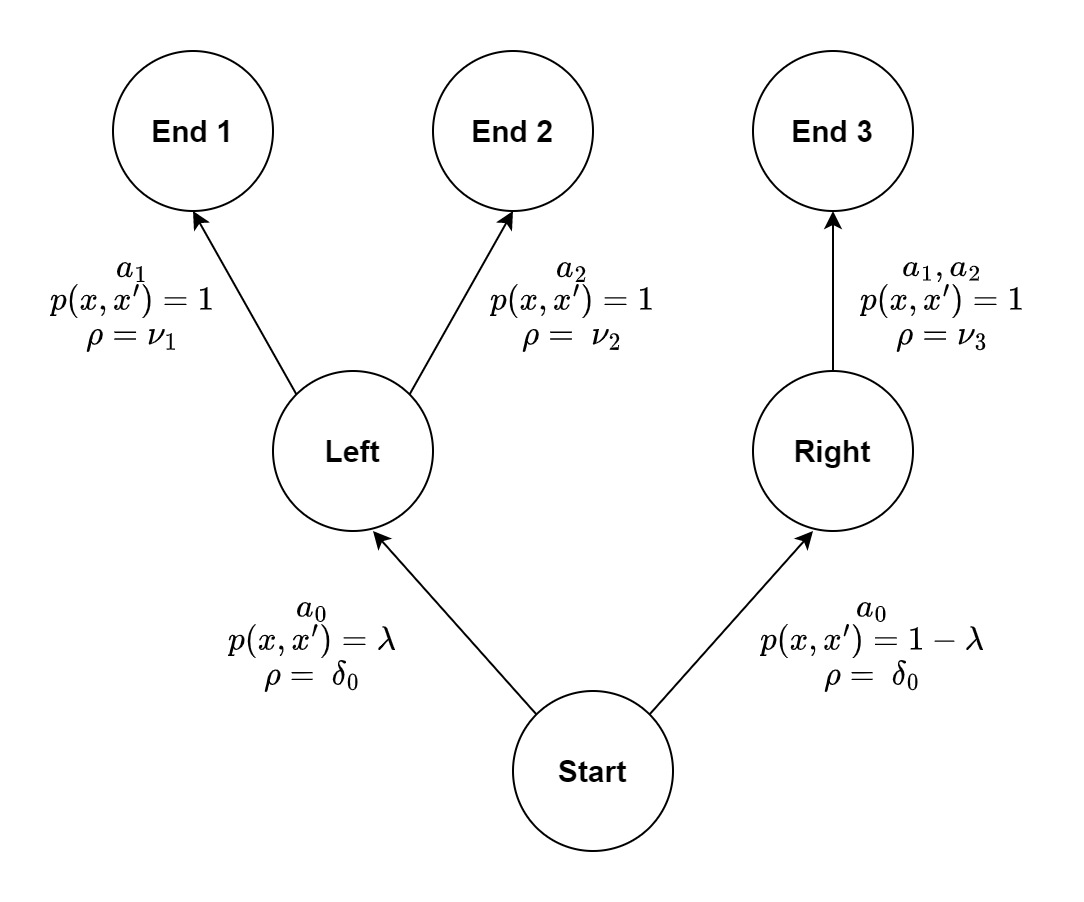}
        \includegraphics[height = 0.25\textheight]{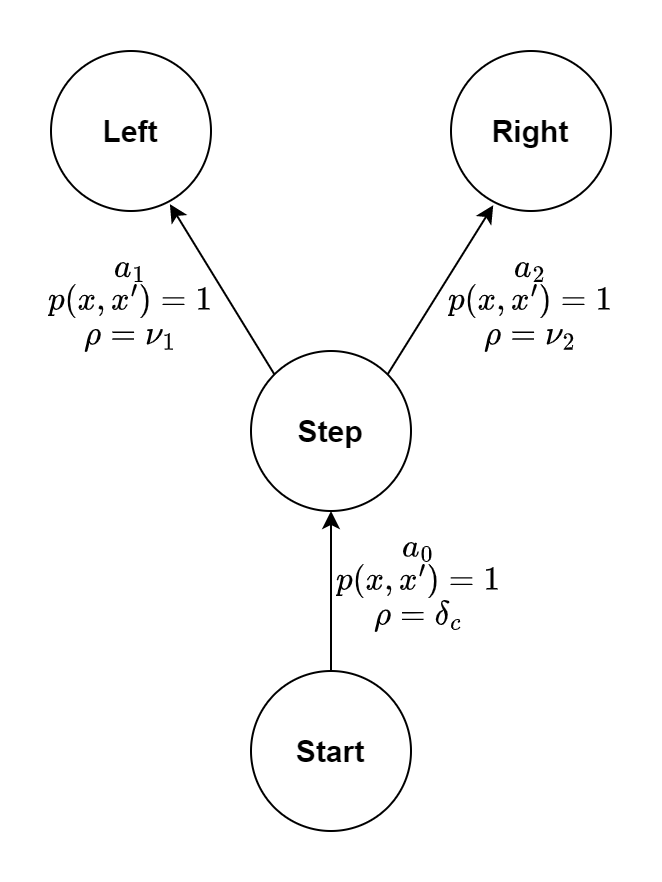}
        \caption{Left: Independence Property Counter Example, Right: Translation Property Counter Example. Each arrow represents a state transition, which is characterized by the action leading to the transition, the probability of such transition, and the reward distribution of the transition.}
        \label{fig:mdp_counter_ex}
    \end{figure}
    
    \paragraph{Independence Property} Let $s$ be a Bellman optimizable statistical functional that does not satisfy the Independence property. That is, there exists $\mes_1, \mes_2, \mes_3 \in \PPPP(\RR)$ and $\lambda \in [0,1]$ such that $\sfunc(\mes_1) \geq \sfunc(\mes_2)$ but $\sfunc(\lambda \mes_1 + (1-\lambda) \mes_3) < \sfunc( \lambda \mes_2 + (1-\lambda) \mes_3)\;$. 
    
    Then consider the MDP in Fig.\ref{fig:mdp_counter_ex} (left) with horizon $H=2$ corresponding to the depth of the tree: The agent starts in \texttt{Start} and must take 2 actions, a unique but random and non-rewarding one ($a_0$) and a final deterministic step ($a_1$ or $a_2$) to a rewarding state. Thus, by construction, the optimal strategy is $(a_0,a_2)$ that leads to \texttt{End 2} with probability $\lambda$ (and \texttt{End 3} with probability $1-\lambda$). The true optimal distribution at \texttt{Start} state is $\dist{0}{*}{} = \lambda \mes_2 + (1-\lambda) \mes_3$. We compute the distributions output by the algorithm:
    \begin{align*}
        H = 2: \qquad& \dist{2}{}{(\text{End 1}, a^*)} = \delta_0,\quad \dist{2}{}{(\text{End 2}, a^*)} = \delta_0,\quad \dist{2}{}{(\text{End 3}, a^*)} = \delta_0\\
        H = 1: \qquad& \dist{1}{}{(\text{Left}, a^*=\arg\max_a s(\nu_a))} = \mes_1,\quad \dist{2}{}{(\text{Right}, a^*=a_1,a_2)} = \mes_3\\
        H = 0: \qquad& \dist{0}{}{(\text{Start}, a^*=a_0)} = \lambda \mes_1 + (1-\lambda) \mes_3
    \end{align*}

    The output return distribution $\dist{0}{}{}$ is not the true optimal $\dist{0}{*}{}$ for $\sfunc$ so the algorithm is incorrect which is a contradiction as $\sfunc$ is assumed to be Bellman optimizable. Hence the property is needed.

    \paragraph{Translation Property} Let $s$ be a Bellman optimizable statistical functional that does not verify the Translation Property, \emph{i.e.} there exists $\mes_1, \mes_2 \in \PPPP(\RR)$, $c \in \RR$ such that $\ \sfunc(\mes_1) \geq \sfunc(\mes_2)$ but $\sfunc(\tau_c \mes_1) < \sfunc(\tau_c \mes_2)\;$. Then consider MDP in Fig.\ref{fig:mdp_counter_ex} (right). The optimal strategy is again $(a_0,a_2)$ by construction. The algorithm output the following distribution:

     \begin{align*}
        H = 2: \qquad& \dist{2}{}{(\text{Left}, a^*)} = \delta_0,\quad \dist{2}{}{(\text{Right}, a^*)} = \delta_0\\
        H = 1: \qquad& \dist{1}{}{(\text{Step}, a^*)} = \mes_1\\
        H = 0: \qquad& \dist{0}{}{(\text{Start}, a^*)} = \tau_c\mes_1
    \end{align*}

    So here again, the algorithm does not output an optimal distribution for $s$, hence the necessity of the property.

   This proof shows that both properties are necessary, but not that they are sufficient. 
      The other implication could be proven, but the proof would be unnecessary as those properties are enough to restrict to only 1 class of function for which we already know is bellman optimizable.

   \end{proof}

   \subsection{Proof of Theorem~\ref{th:affine_exp_utilities}}

   \begin{proof}

    For any return distribution $\nu$, let $\sfunc(\nu)$ be the considered functional. Since we assume that $\sfunc$ is $W_1$-continuous Bellmman optimizable, the 
    by the Independence Property (Lemma~\ref{lem:bellman_opt_prop}) we know from the Expected Utility Theorem  that we can assume: $\sfunc(\nu) = \sfunc_f(\nu) =\int_\RR f(x) d\nu(x)$ for some continuous, monotonous mapping $f$. Without loss of generality, we may assume by a density argument that $f$ is twice continuously differentiable. 
    
    By the Intermediate Value Theorem, we can then define $\phi(h) = f^{-1}(\frac{1}{2}(f(0) + f(h)))$, so that $\frac{1}{2}(f(0) + f(h)) = f( \phi(h))$ and in particular $\phi(0)=f^{-1}(f(0))=0$. 
    
    A very special case is when $f$ is constant, which satisfies the theorem. We now assume that there exists point $x_0$ such that $f'$ does not vanish on a neighborhood of $x_0$. On this neighborhood, using the inverse function theorem, $\phi$ is also twice differentiable. Without loss of generality, we assume that $x_0 = 0$.
    
    For any fixed $h>0$, we consider the probability distributions $\mes_1 = \frac{1}{2}(\delta_0 + \delta_h)$ and $\mes_2 = \delta_{\phi(h)}$. Remark that $ \sfunc_f(\nu_1)= \int f\ \dd\mes_1 = \frac{1}{2}(f(0) + f(h)) $ and $\sfunc_f (\nu_2) = f(\phi(h)) = \frac{1}{2}(f(0) + f(h))$ by definition of $\phi$, so $\sfunc_f(\nu_1)=\sfunc_f(\nu_2)$ . 

The Translation property implies that for all $x\in \mathbb{R}$, $\sfunc_f(\nu_1)\leq \sfunc_f(\nu_2) \implies \sfunc_f\big(\nu_1(\cdot + x)\big)\leq \sfunc_f\big(\nu_2(\cdot + x)\big)$ and $\sfunc_f(\nu_1)\geq \sfunc_f(\nu_2) \implies \sfunc_f\big(\nu_1(\cdot + x)\big)\geq \sfunc_f\big(\nu_2(\cdot + x)\big)$. Hence, $\forall x\in \mathbb{R}$, $ \frac{1}{2}\big(f(x) + f(x+h)\big) =  f(x + \phi(h))$. 
 
    This equation can be  differentiated twice with respect to $h$. For any value of $x$, we obtain:
    \begin{align}
        \frac{1}{2}f'(x+h) &= \phi'(h)f'(x + \phi(h))\quad \hbox{ and} \label{eq:dif_1}\\
        \frac{1}{2}f''(x+h) &= \phi''(h)f'(x + \phi(h)) + \phi'(h)^2f''(x+\phi(h))\;.\label{secder}
    \end{align}

    Recall that by  definition, $\phi(0)=0$. Now, for $x=0$, Eq.~\eqref{eq:dif_1} yields
    $\frac{1}{2}f'(0) = \phi'(0)f'(\phi(0)) = \phi'(0)f'(0)$ and, since $f'(0)\neq 0$, $\phi'(0)=\frac{1}{2}$.
    
    Now, choosing $h=0$ in \eqref{secder} and plugging in the values of $\phi(0)$ and $\phi'(0)$, we obtain for all $x\in \mathbb{R}$:

    \begin{equation*}
        \frac{1}{4}f''(x) = \phi''(0)f'(x)\;.
    \end{equation*}
     
    We then consider two cases, depending on whether  $\phi''(0)$ is null or not.

    \textbf{Case 1:} $\phi''(0) = 0$. The equation simply becomes $f''(x) = 0$, hence $f$ is affine: $\exists\ a,b\in\mathbb{R}, \;   f(x) = ax + b$.

    \textbf{Case 2:} $\phi''(0) \neq 0$. We write $\beta = 4\phi''(0)$. The differential equation becomes $f''(x) = \beta f'(x)$, whose solutions are of the form 
    \begin{equation*}
        \exists\ c_1, c_2, \beta, \quad f(x) = c_1\exp(\beta x) + c_2  \;.     
    \end{equation*}
Hence, $f$ can only be the identity or the exponential, up to an affine transformation.

\end{proof}

\end{document}